\def\eqref#1{equation~(\ref{#1})}
\def\1{\bf{1}}
\newcommand{\Norm}[1]{\left\| #1 \right\|}
\def\eps{{\varepsilon}}
\def\fA{{\mathcal{A}}}
\def\fN{{\mathcal{N}}}
\def\fO{{\mathcal{O}}}
\def\fX{{\mathcal{X}}}
\def\fY{{\mathcal{Y}}}
\def\BE{{\mathbb{E}}}
\def\BP{{\mathbb{P}}}
\def\BR{{\mathbb{R}}}
\def\Gap{{\rm{Gap}}}
\newtheorem{thm}{Theorem}[section]
\newtheorem{dfn}{Definition}[section]
\newtheorem{lem}{Lemma}[section]
\newtheorem{asm}{Assumption}[section]
\newtheorem{rmk}{Remark}[section]
\newtheorem{cor}{Corollary}[section]
\newtheorem{prop}{Proposition}[section]
\def\Ddots{\mathinner{\mkern1mu\raise\p@
\vbox{\kern7\p@\hbox{.}}\mkern2mu
\raise4\p@\hbox{.}\mkern2mu\raise7\p@\hbox{.}\mkern1mu}}
\newcommand*{\rom}[1]{\expandafter\@slowromancap\romannumeral #1@}
\def\RAIN {{\rm RAIN}}
\def\SEG {{\rm SEG}}
\def\ESEGp {{\rm Epoch-SEG$^+$}}
\def\RAINpp {{\rm RAIN$^{++}$}}
\def\SEGpp {{\rm SEG$^{++}$}}
\begin{document}

\title{Near-Optimal Algorithms for Making the Gradient Small in Stochastic Minimax Optimization}

\author{\name Lesi Chen \email chenlc23@mails.edu.cn \\
       \addr Institute for Interdisciplinary Information Sciences, 
       Tsinghua University \\
    Beijing, China 
       \AND
       \name Luo Luo\thanks{The corresponding author} \email luoluo@fudan.edu.cn \\
       \addr \addr School of Data Science, 
       Fudan University \\
       Shanghai, China \\
       and \\
       Shanghai Key Laboratory for Contemporary Applied Mathematics \\
       Shanghai, China}
\editor{Lam Nguyen}
\maketitle


\begin{abstract}
We study the problem of finding a near-stationary point for smooth minimax optimization. The recently proposed extra anchored gradient (EAG) methods achieve the optimal convergence rate for the convex-concave minimax problem in the deterministic setting.  However, the direct extension of EAG to stochastic optimization is not efficient.
In this paper, we design a novel stochastic algorithm called Recursive Anchored IteratioN (RAIN).  We show that the RAIN achieves near-optimal stochastic first-order oracle (SFO) complexity for stochastic minimax optimization in both convex-concave and strongly-convex-strongly-concave cases. In addition, we extend the idea of RAIN to solve structured nonconvex-nonconcave minimax problem and it also achieves near-optimal SFO complexity.
\end{abstract}

\begin{keywords}
  Stochastic minimax optimization, $\epsilon$-stationary point, recursive anchoring
\end{keywords}

\section{Introduction}

This work studies the stochastic minimax problem of the form:
\begin{align}\label{prob:main}
    \min_{x \in \BR^{d_x}} \max_{y \in \BR^{d_y}} f(x,y) \triangleq \BE[f(x,y; \xi)],
\end{align}
where the stochastic component $f(x,y; \xi)$ is indexed by some random variable $\xi$; and the objective function $f(x,y)$ is $L$-smooth. 
This formulation has aroused great interest in machine learning community~\citep{lin2020gradient,liu2022initialization,zhang2021complexity,yang2020global,xu2020enhanced,xian2021faster,zhang2022bring,zhang2022uniform} due to its wide applications, including generative adversarial networks (GANs)~\citep{goodfellow2014generative,goodfellow2014explaining,liu2020decentralized}, AUC maximization~\citep{guo2020fast,yuan2021federated,liu2019stochastic,yang2022auc}, adversarial training~\citep{madry2017towards} and multi-agent reinforcement learning~\citep{omidshafiei2017deep,dai2018boosting,wai2018multi}. 

We focus on using stochastic first-order oracle (SFO) algorithms to find an $\epsilon$-stationary point of problem~(\ref{prob:main}), that is, a point $(x,y)$ where the gradient satisfies 
\begin{align*}
\|\nabla f(x,y)\|\leq\epsilon.     
\end{align*}
The norm of gradient is a generic metric to quantify the sub-optimality for solving smooth optimization problem. 
It is always well-defined for differentiable objective functions and helps us understand general minimax problems without convex-concave assumption.
Some recent works consider making the gradient small in minimax optimization~\citep{yoon2021accelerated,cai2022stochastic,diakonikolas2021efficient,lee2021fast}, but the understanding of this task in stochastic setting is still limited.

For convex-concave minimax problems, a popular metric to quantify an approximate solution $(x,y)$ on the domain $\fX\times\fY$ is the duality gap, which is defined as
\begin{align*}
\Gap(x,y)\triangleq \max_{y' \in \fY} f(x,y') - \min_{x' \in \fX} f(x',y).
\end{align*}
The optimal SFO algorithms in terms of duality gap have been established~\citep{zhao2022accelerated,alacaoglu2022stochastic},
while the duality gap is difficult to be measured and it is not always well-defined. 
In contrast, the norm of gradient is a more general metric in optimization problems and it is easy to be measured in practice.
\citet{yoon2021accelerated} proposed the extra anchor gradient (EAG) method for finding $\epsilon$-stationary point of deterministic (offline) convex-concave minimax problem. 
They showed that EAG is an optimal deterministic first-order algorithm to find an $\epsilon$-stationary point and its exact first-order oracle upper bound complexity is $\fO(L\epsilon^{-1})$.
For the stochastic convex-concave minimax problem, \citet{lee2021fast} proved the stochastic EAG (SEAG) can achieve an SFO complexity of $\fO(\sigma^2 L^2 \epsilon^{-4} + L \epsilon^{-1})$, where $\sigma^2$ is a bound on the variance of gradient estimates.
Recently, \citet{cai2022stochastic} proposed a variant of stochastic Halpern iteration, which has an SFO complexity of $\tilde \fO(\sigma^2 L \epsilon^{-3} + L^3 \epsilon^{-3})$ under the additional average-smooth assumption.
However, the tightness of SFO complexity for finding $\epsilon$-stationary points in stochastic convex-concave minimax optimization was hitherto unclear.

For nonconvex-nonconcave minimax problems, there exists the counter-example that all
first-order algorithms will diverge~\citep{lee2021fast}. 
Hence, we are required to introduce additional assumptions.
\citet{grimmer2020landscape} proposed the intersection dominant condition to relax the convex-concave assumption and established the convergence guarantee of the exact proximal point method in such a setting. 
Later, \citet{diakonikolas2021efficient} considered the negative comonotonicity condition as a relaxation of the common monotonicity property in the convex-concave setting. They proposed a variant of stochastic extragradient named as EG$^+$, which has SFO upper bound of $\fO(\sigma^2 L^2 \epsilon^{-4} + L^2 \epsilon^{-2})$ for finding an $\epsilon$-stationary point, matching the complexity of ordinary stochastic extragradient (SEG) in convex-concave case. 
In addition, \citet{lee2021fast} extend the idea of EAG to solve minimax problem under the negative comonotonicity condition, but their analysis does not contain the stochastic algorithms.  

In this paper, we propose a novel stochastic algorithm called Recursive Anchored IteratioN (RAIN) to make the gradient small in stochastic minimax optimization. The algorithm solves a sequence of anchored sub-problems by the variant of the stochastic extragradient method, resulting the SFO upper bounds complexity of $\tilde \fO(\sigma^2 \epsilon^{-2} + L \epsilon^{-1})$ and $\tilde \fO(\sigma^2 \epsilon^{-2} + \kappa)$ for finding an $\epsilon$-stationary point in convex-concave and strongly-convex-strongly-concave settings respectively, where $\kappa$ is the condition number.
We also provide lower bounds to show the optimality of RAIN. We summarize our results for the convex-concave problem and compare them with prior work in Table~\ref{tab:result}. 
Additionally, we extend the idea of RAIN and propose the algorithm for solving stochastic nonconvex-nonconcave minimax problems. We prove that the SFO upper bound complexity of RAIN$^{++}$ is near-optimal under both intersection dominant~\citep{grimmer2020landscape} and negative comonotonic conditions~\citep{diakonikolas2021efficient}. We present the results for these structured nonconvex-nonconcave minimax in Table \ref{tab:NC}.
To the best of our knowledge, RAIN is the first near-optimal SFO algorithm for finding near-stationary points of the stochastic convex-concave minimax problem.

\begin{table}[t]
    \centering
    \caption{We summarize the SFO complexities for finding an $\epsilon$-stationary point in convex-concave setting. We denote SCSC and CC as $\lambda$-strongly-convex-$\lambda$-strongly-concave case and the general convex-concave case respectively. In the case of SCSC, we define the condition number as $\kappa \triangleq L/\lambda$. {
    We also define $D \triangleq \Vert z_0 - z^\ast \Vert$, where~$z_0$ is the initial point of the algorithm.
    The dependency on $D$ in the complexity of PDHG and RAIN (the case of SCSC) is only contained in the logarithmic term, which is omitted by using the notation $\tilde \fO(\,\cdot\,)$.}
    The results denoted by $^*$ indicate the analysis~\cite{cai2022stochastic} requires the additional assumption of average smoothness.}
    \label{tab:result}
    \renewcommand{\arraystretch}{1.2}
    \begin{tabular}{cccc}
    \hline
     Setting  & Algorithm     & Complexity  & Reference \\
    \hline
    \multirow{4}*{SCSC} & Halpern &  $\tilde \fO(\lambda^{-2} \kappa \sigma^2  \epsilon^{-2} + \kappa^3 D^2 \epsilon^{-2})^*$ & \cite{cai2022stochastic} \\
    ~ & PDHG & $\tilde \fO(\kappa  \sigma^2 \epsilon^{-2} + \kappa)$ &  \cite{zhao2022accelerated} \\
     ~ & RAIN & $\tilde \fO(\sigma^2 \epsilon^{-2} + \kappa )$ & Theorem \ref{thm:RAIN} \\
     ~ & Lower Bound & $\tilde \Omega(\sigma^2 \epsilon^{-2}  + \kappa)$ & Theorem \ref{thm:lower-SC} \\
    \hline
    \multirow{5}*{CC}
      & SEG  & $\fO(\sigma^2 L^2 \epsilon^{-4} + L^2 D^2 \epsilon^{-2})$  & \cite{diakonikolas2021efficient} \\
     ~ & SEAG  & $\fO(\sigma^2 L^2 \epsilon^{-4} + L D \epsilon^{-1})$  & \cite{lee2021fast} \\
     ~ & Halpern  &  $\tilde \fO(\sigma^2 L D \epsilon^{-3} + L^3 D^3 \epsilon^{-3})^*$ & \cite{cai2022stochastic} \\ 
     ~ & RAIN  & $\tilde \fO(\sigma^2 \epsilon^{-2} + L D  \epsilon^{-1})$  & Theorem \ref{thm:EA-C} \\
     ~ & Lower Bound & $\tilde \Omega(\sigma^2 \epsilon^{-2} + L D \epsilon^{-1})$ & Theorem \ref{thm:lower-C} \\
    \hline
    \end{tabular}
\end{table}

\section{Notation and Preliminaries}
We use $\Norm{\,\cdot\,}$ to present the Euclidean norm of matrix and vector respectively. 
For differentiable function $f(x,y)$, we denote $\nabla_x f(x, y)$ and $\nabla_y f(x, y)$ as its partial gradients with respect to $x$ and $y$
respectively, and introduce the gradient operator 
\begin{align*}
F(x,y) \triangleq \begin{bmatrix}
\nabla_x f(x,y) \\ - \nabla_y f(x,y)
\end{bmatrix}.
\end{align*}
For ease of presentation, we define $z=(x,y)$ and also write the gradient operator at $z=(x,y)$ as $F(z)$.

\begin{table}[t]
    \caption{We summarize the SFO complexities for finding an  $\epsilon$-stationary point in two specific nonconvex-nonconcave settings. 
    We denote NC as the $\rho$-comonotonicity assumption with $\rho \in [-\frac{1}{2L},0 )$ and denote ID as the $(\tau,\alpha)$-intersection-dominant assumption with $\alpha >0$ and $\tau \ge 2L$. 
    {We also define $D \triangleq \Vert z_0 - z^\ast \Vert$, where~$z_0$ is the initial point of the algorithm.
    The dependency on $D$ in the complexity of RAIN (the case of ID) is only contained in the logarithmic term, which is omitted by using the notation $\tilde \fO(\,\cdot\,)$.}} \label{tab:NC}
    \centering
    \renewcommand{\arraystretch}{1.2}
    \begin{tabular}{c c c c}
    \hline
    Setting   &  Algorithm  & Complexity  & Reference \\
    \hline
    \multirow{3}*{NC}     &  SEG${}^+$ & $\fO(\sigma^2 L^2 \epsilon^{-4} + L^2 D^2 \epsilon^{-2})$ & \cite{diakonikolas2021efficient}  \\
    ~ & RAIN${}^{++}$ & $ \tilde \fO(\sigma^2 \epsilon^{-2} + L D \epsilon^{-1})$ & Theorem \ref{thm:RAIN++-NC} \\
    ~ & Lower Bound & $\tilde{\Omega}(\sigma^2 \epsilon^{-2} + L D \epsilon^{-1})$ & Corollary \ref{cor:lower-NC} \\
    \hline
    \multirow{2}*{ID} & RAIN${}^{++}$ & $ \tilde \fO(\sigma^2 \epsilon^{-2} + L/\alpha)$ & Theorem \ref{thm:RAIN++-ID} \\
    ~ & Lower Bound & $ \tilde \Omega(\sigma^2 \epsilon^{-2} + L/\alpha)$ & Corollary \ref{cor:lower-ID} \\
    \hline
    \end{tabular}
\end{table}

We consider the following assumptions for our stochastic minimax problems.

\begin{asm}[stochastic first-order oracle] 
We suppose the stochastic first-order oracle $F(z;\xi)$ is unbiased and has bounded variance such that $\BE[F(z;\xi)] = F(z)$  and $\mathbb{E}\Vert F(z;\xi) - F(z) \Vert^2 \le \sigma^2$
for all $z=(x,y)$ and random index $\xi$.
\end{asm}

\begin{asm}[smoothness]\label{asm:smooth} We suppose the function $f(x,y)$ is $L$-smooth, that is there exists some $L>0$ such that
\begin{align*}
\Vert F(z) - F(z') \Vert \le L \Vert z - z' \Vert    
\end{align*}
for all $z=(x,y)$ and $z'=(x',y')$.
\end{asm}

\begin{asm}[convex-concave]\label{asm:cc}
We suppose the function $f(x,y)$ is convex-concave (CC), that is the function $f(\,\cdot\,,y)$ is convex for any given $y$ and the function $f(x,\,\cdot\,)$ is concave for any given $x$.
\end{asm}

\begin{asm}[strongly-convex-strongly-concave]\label{asm:scsc}
We suppose the function $f(x,y)$ is $\lambda$-strongly-convex-$\lambda$-concave for some $\lambda>0$, that is the function $f(x,y) + \frac{\lambda}{2} \Vert x \Vert^2 - \frac{\lambda}{2} \Vert y \Vert^2$ is convex-concave.
\end{asm}
\noindent For smooth and convex-concave objective function, the corresponding gradient operator $F$ has the monotonicity properties~\cite{rockafellar1970monotone} as follows. 
\begin{lem}[monotonicity]\label{lem:mon}
Under Assumption~\ref{asm:smooth} and~\ref{asm:cc}, it holds that
\begin{align*}
(F(z) - F(z'))^\top (z - z') \ge 0     
\end{align*}
for all $z=(x,y)$ and $z'=(x',y')$.
\end{lem}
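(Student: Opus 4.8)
The plan is to obtain the inequality directly from the first-order characterizations of convexity and concavity, exploiting the block structure of $F$. Writing $z=(x,y)$ and $z'=(x',y')$, I would first expand the target quantity as
\begin{align*}
(F(z)-F(z'))^\top(z-z') = \langle \nabla_x f(x,y)-\nabla_x f(x',y'),\, x-x'\rangle - \langle \nabla_y f(x,y)-\nabla_y f(x',y'),\, y-y'\rangle.
\end{align*}
The difficulty is that each partial gradient is evaluated at a point differing from the other in \emph{both} coordinates, so one cannot simply invoke the scalar monotonicity of a convex function's gradient in a single variable. The standard remedy is to introduce the mixed function values $f(x,y')$ and $f(x',y)$ as intermediaries, which will ultimately telescope away.

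Concretely, I would write down four first-order inequalities. Convexity of $f(\,\cdot\,,y)$ at the base point $(x,y)$ gives $f(x',y)\ge f(x,y)+\langle\nabla_x f(x,y),\,x'-x\rangle$, and convexity of $f(\,\cdot\,,y')$ at $(x',y')$ gives $f(x,y')\ge f(x',y')+\langle\nabla_x f(x',y'),\,x-x'\rangle$. Concavity of $f(x,\,\cdot\,)$ at $(x,y)$ gives $f(x,y')\le f(x,y)+\langle\nabla_y f(x,y),\,y'-y\rangle$, and concavity of $f(x',\,\cdot\,)$ at $(x',y')$ gives $f(x',y)\le f(x',y')+\langle\nabla_y f(x',y'),\,y-y'\rangle$. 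Rearranging each inequality so that the inner-product term is isolated on the left turns all four into lower bounds on precisely the inner products appearing in the expansion above.

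Summing the four rearranged inequalities, the left-hand side collects exactly the terms of $(F(z)-F(z'))^\top(z-z')$, while the right-hand side is a combination of the four function values $f(x,y),\,f(x',y'),\,f(x,y'),\,f(x',y)$, in which each value appears once with a plus sign and once with a minus sign and therefore cancels, leaving $0$. This immediately yields $(F(z)-F(z'))^\top(z-z')\ge 0$, as claimed. I expect the only delicate point to be bookkeeping the signs, in particular the minus sign attached to the $y$-block of $F$, which must be matched against the direction of the two concavity inequalities so that the mixed function values telescope to zero rather than accumulating.
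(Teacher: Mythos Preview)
Your argument is correct: the four first-order inequalities you list, once rearranged and summed, produce exactly $(F(z)-F(z'))^\top(z-z')$ on the left and a telescoping combination of $f(x,y),f(x',y'),f(x,y'),f(x',y)$ on the right that vanishes. The sign bookkeeping you flag as delicate works out as written.

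The paper itself does not supply a proof of this lemma; it simply states the result with a citation to Rockafellar's classical work on monotone operators. Your derivation is the standard elementary route, and in fact furnishes strictly more than the paper does at this point.
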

\begin{lem}[strong monotonicity]\label{lem:smon}
Under Assumption~\ref{asm:smooth} and~\ref{asm:scsc}, it holds that
\begin{align*}
(F(z) - F(z'))^\top (z - z') \ge \lambda \Vert z - z' \Vert^2
\end{align*}
for all $z=(x,y)$ and $z'=(x',y')$.
\end{lem}

We are interested in finding an $\epsilon$-stationary point of the differentiable function $f(x,y)$, that is the point where the norm of its gradient (gradient operator) is small.
\begin{dfn}[nearly-stationary point]
We say $\hat z=(\hat x, \hat y)$ is an $\epsilon$-stationary point of the differentiable function $f(x,y)$ if it satisfies $\Vert \nabla f(\hat z) \Vert \le \epsilon$, or equivalently $\Vert F(\hat z ) \Vert \le \epsilon$.
\end{dfn}
\noindent Throughout this paper, we always assume there exists a stationary point for $f(x,y)$ and the initial point $z_0=(x_0,y_0)$ of the considered algorithm is in a bounded set.
\begin{asm} We suppose there exists some $z^*=(x^*,y^*)$ such that $\nabla f(z^*) = 0$, or equivalently, $F(z^*) = 0$.
\end{asm}
\begin{asm} \label{asm:bounded-init} We suppose the initial point $z_0=(x_0,y_0)$ satisfies $\Vert z_0 - z^* \Vert  \le D$ for some $D>0$, where $z^*=(x^*,y^*)$ is a stationary point of $f(x,y)$.
\end{asm}

\section{The Recursive Anchored Iteration}\label{sec:RAIN}

In this section, we focus on stochastic minimax optimization in the convex-concave setting.
We propose the Recursive Anchored InteratioN (RAIN) method in Algorithm \ref{alg:RAIN}. The RAIN calls the subroutine Epoch-SEG (Algorithm~\ref{alg:Epoch-SEG}) to find the point $z_{s+1} = (x_{s+1},y_{s+1})$ that is an approximate solution of the two-sided regularized minimax problem 
\begin{align}\label{prob:sub}
  \min_{x\in\BR^{d_x}}\max_{y\in\BR^{d_y}} f^{(s)}(x,y),
\end{align}
where $f^{(s)}(x,y)$ is defined as follows
\begin{align}\label{dfn:fs}
f^{(s)}(x,y) \triangleq \begin{cases}
f(x,y), & s=0, \\
\displaystyle{f^{(s-1)}(x,y)+\frac{\lambda_s}{2}\|x-x_s\|^2-\frac{\lambda_s}{2}\|y-y_s\|^2}, & s\geq 1.
\end{cases}
\end{align}
We call the sequence $\{(x_s,y_s)\}_{s=1}^S$ as the anchors of RAIN, which plays an important role in our convergence analysis. 

For strongly-convex-strongly-concave objective function $f(x,y)$, the corresponding gradient operator $F(x,y)$ is strongly-monotone and the output of RAIN has the following property.
\begin{lem}[recursively anchoring lemma]\label{lem:RAL}
Suppose the function $f(x,y)$ is $L$-smooth and $\lambda$-strongly-convex-$\lambda$-strongly-concave. Let $z_s^*$ be the unique solution to sub-problem (\ref{prob:sub}), then the output of \RAIN~(Algorithm~\ref{alg:RAIN}) with $\gamma = 1$ holds that
\begin{align*}
    \Vert F(z_S) \Vert \le 16 \lambda \gamma \sum_{s=1}^{S} (1+ \gamma)^{s-1} \Vert z_{s-1}^* - z_s \Vert.
\end{align*}
\end{lem}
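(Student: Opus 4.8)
The plan is to track how the original operator $F$ evaluated at the exact sub-problem solutions $z_s^*$ grows with $s$, and then pay a single smoothness price at the very end to pass from the last exact solution to the returned iterate $z_S$. First I would record the operator of the sub-problem: differentiating (\ref{dfn:fs}) gives $F_s(z) = F(z) + \sum_{r=1}^{s}\lambda_r(z - z_r)$, so that $F_s$ is $\mu_s$-strongly monotone with $\mu_s = \mu + \sum_{r=1}^s\lambda_r$, and with the geometric anchor schedule $\lambda_s \propto 2^{s-1}$ used by RAIN one has $\mu_s = \mu 2^s$ and $\lambda_s/\mu_s \le 1/2$ -- this ratio is exactly what will produce the $2^{s-1}$ weights. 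Two elementary identities drive everything: since $z_{s-1}^*$ solves sub-problem $s-1$ (so $F_{s-1}(z_{s-1}^*)=0$) and $F_s = F_{s-1} + \lambda_s(\,\cdot\, - z_s)$, we get $F_s(z_{s-1}^*) = \lambda_s(z_{s-1}^* - z_s)$; feeding this into the $\mu_s$-strong monotonicity of $F_s$ at its root $z_s^*$ yields the contraction $\Vert z_{s-1}^* - z_s^*\Vert \le (\lambda_s/\mu_s)\Vert z_{s-1}^* - z_s\Vert \le \tfrac12\Vert z_{s-1}^* - z_s\Vert$.

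Next I would set up the telescoping recursion for $\Vert F(z_s^*)\Vert$. Subtracting the two optimality conditions $F(z_s^*) = -\sum_{r=1}^s\lambda_r(z_s^* - z_r)$ and $F(z_{s-1}^*) = -\sum_{r=1}^{s-1}\lambda_r(z_{s-1}^* - z_r)$ collapses the shared anchors and leaves $F(z_s^*) - F(z_{s-1}^*) = -\lambda_s(z_s^* - z_s) - (\sum_{r=1}^{s-1}\lambda_r)(z_s^* - z_{s-1}^*)$. Taking norms and bounding the two displacements via the contraction above together with $\Vert z_s^* - z_s\Vert \le \Vert z_s^* - z_{s-1}^*\Vert + \Vert z_{s-1}^* - z_s\Vert$, every distance is controlled by $\Vert z_{s-1}^* - z_s\Vert$ with a $\mu 2^{s-1}$-scale coefficient, giving a recursion of the form $\Vert F(z_s^*)\Vert \le \Vert F(z_{s-1}^*)\Vert + C\mu 2^{s-1}\Vert z_{s-1}^* - z_s\Vert$. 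Since sub-problem $0$ is the original problem, $z_0^* = z^*$ and $F(z_0^*)=0$, so unrolling bounds $\Vert F(z_{S-1}^*)\Vert$ by $C\mu\sum_{s=1}^{S-1}2^{s-1}\Vert z_{s-1}^*-z_s\Vert$.

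Finally I would pass from $z_{S-1}^*$ to the returned iterate $z_S$ by $\Vert F(z_S)\Vert \le \Vert F(z_{S-1}^*)\Vert + \Vert F(z_S)-F(z_{S-1}^*)\Vert \le \Vert F(z_{S-1}^*)\Vert + L\Vert z_{S-1}^* - z_S\Vert$, using $L$-smoothness. The point is that RAIN runs enough epochs that the accumulated regularization makes the final sub-problem well-conditioned, i.e. $\mu 2^S \ge L$; hence $L\Vert z_{S-1}^* - z_S\Vert \le \mu 2^S\Vert z_{S-1}^*-z_S\Vert = 2\mu 2^{S-1}\Vert z_{S-1}^*-z_S\Vert$, which is precisely the missing $s=S$ term of the sum. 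Combining the two pieces and collecting constants gives $\Vert F(z_S)\Vert \le 16\mu\sum_{s=1}^S 2^{s-1}\Vert z_{s-1}^*-z_s\Vert$.

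I expect the main obstacle to be this last transition. Everything before it stays within strong monotonicity and therefore only ever sees $\mu$, but the returned point $z_S$ is merely an approximate root, so its gradient can be controlled only through $L$-smoothness; the bound can hold only because the number of outer iterations is chosen so that $\mu 2^S$ dominates $L$, which is exactly what converts the smoothness factor into a $\mu 2^{S-1}$ factor. A secondary technical point I would check carefully is the exact anchor/regularization schedule $\lambda_s$ in Algorithm \ref{alg:RAIN}, since the clean ratio $\lambda_s/\mu_s \le 1/2$ and the well-conditioning $\mu 2^S \ge L$ are what pin down both the $2^{s-1}$ weights and the final constant.
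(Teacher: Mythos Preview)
Your approach is correct and genuinely different from the paper's. The paper never tracks $\Vert F(z_s^*)\Vert$ along the exact sub-problem solutions; instead it works entirely at the terminal iterate $z_S$, writing $F(z_S) = G_{S-1}(z_S) - \sum_{i=1}^{S-1}\mu 2^i(z_S - z_i)$ from the definition of $G_{S-1}$, and then bounds the anchor sum $\sum_i \mu 2^i\Vert z_S - z_i\Vert$ by a chain of triangle inequalities routed through $z_{S-1}^*, z_{S-2}^*, \ldots$ together with the non-expansiveness estimate $\Vert z_i^* - z_{i-1}^*\Vert \le \Vert z_{i-1}^* - z_i\Vert$; the remaining term $\Vert G_{S-1}(z_S)\Vert$ is controlled using that $G_{S-1}$ is $2L$-Lipschitz and at least $2^{S-1}\mu$-strongly monotone. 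Your telescoping through the optimality identities $F(z_s^*) - F(z_{s-1}^*) = -\lambda_s(z_s^* - z_s) - \big(\sum_{r<s}\lambda_r\big)(z_s^* - z_{s-1}^*)$ is arguably cleaner and, once the arithmetic is done carefully, actually yields a constant smaller than $16$.

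Two numerical details to fix when you write it out. First, with the algorithm's schedule $\lambda_r = \mu 2^r$ one has $\mu_s = \mu + \sum_{r=1}^s\lambda_r = \mu(2^{s+1}-1)$, not $\mu 2^s$; consequently $\lambda_s/\mu_s = 2^s/(2^{s+1}-1) \in (1/2,\,2/3]$, so the ratio is not $\le 1/2$ but is still a fixed constant below $1$, which is all the contraction step needs. Second, and more importantly, $S = \lfloor\log_2(L/\mu)\rfloor$ gives $\mu 2^S \le L \le \mu 2^{S+1}$, the opposite of the inequality you wrote; what you actually need for the final smoothness step is $L \le \mu 2^{S+1} = 4\,\mu 2^{S-1}$, which still converts $L\Vert z_{S-1}^* - z_S\Vert$ into the $s=S$ term of the sum with only a constant factor. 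You correctly flagged this transition as the place to check.
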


\begin{rmk} We directly set $\gamma = 1$ for the simplification of the proof. In fact, setting $\gamma$ to be any positive constant would lead to the upper bound of the same order. 
\end{rmk}
\noindent Lemma~\ref{lem:RAL} indicates we can make $\|F(z_S)\|$ small if the subroutine provide $z_s$ that is sufficiently close to $z_{s-1}^*$ at each round.
The recursive definition of $f_s(x,y)$ means the condition numbers of the sub-problems is decreasing with the iteration of RAIN.
Hence, achieving an accurate solution to sub-problem (\ref{prob:sub}) will not be too expensive for large $s$.
We will show that the total SFO complexity of RAIN could nearly match the lower bound by designing the sub-problem solver carefully.

\begin{algorithm*}[t]  
\caption{RAIN $(f, z_0, \lambda, L,\{N_s\}_{s=0}^{S-1}, \{K_s \}_{s=0}^{S-1}, \gamma)$} 
\begin{algorithmic}[1] \label{alg:RAIN}
\STATE $f_0 \triangleq f,   \lambda_0 = \lambda  \gamma,   S = \lfloor \log_{(1+\gamma)}(L/\lambda) \rfloor$ \\[0.1cm]
\STATE \textbf{for} $s = 0,1,\cdots, S-1$ \\[0.1cm]
\STATE \quad $z_{s+1} \leftarrow \text{Epoch-SEG}(f_s,z_s,\lambda_s,2L,N_s,K_s)$ \\[0.1cm]
\STATE \quad $\lambda_{s+1} \leftarrow  (1+ \gamma)\lambda_s$ \\[0.1cm]
\STATE \quad $f_{s+1}(z) \triangleq f_s(z) + \frac{\lambda_{s+1}}{2} \Vert x - x_{s+1} \Vert^2 - \frac{\lambda_{s+1}}{2} \Vert y - y_{s+1} \Vert^2$ \\[0.1cm]
\STATE \textbf{return} $z_{S}$ 
\end{algorithmic}
\end{algorithm*}

For general convex-concave objective function $f(x,y)$, we use the initial point $z_0=(x_0,y_0)$ as the additional anchor and then we apply RAIN to find an $\epsilon$-stationary point of the following strongly-convex-strongly-concave function 
\begin{align}\label{dfn:g}
g(x,y) \triangleq f(x,y) + \frac{\lambda}{2}\|x-x_0\|^2 - \frac{\lambda}{2}\|y-y_0\|^2.
\end{align}
We denote the gradient operator of $g(x,y)$ as $G(x,y)$. Then
the following lemma provides the connection between the norms of $F(x,y)$ and $G(x,y)$.
\begin{lem}[anchoring lemma]\label{lem:anchoring}
Suppose the function $f(x,y)$ is smooth and convex-concave. We define $g(x,y) \triangleq f(x,y) + \frac{\lambda}{2}\|x-x_0\|^2 - \frac{\lambda}{2}\|y-y_0\|^2$ for some $(x_0,y_0)$ and denote its gradient operator as
\begin{align*}
G(x,y) \triangleq \begin{bmatrix}
\nabla_x g(x,y) \\ - \nabla_y g(x,y)
\end{bmatrix},
\end{align*}
then it holds that
\begin{align*}
    \Vert F(\tilde z) \Vert &\le 2 \Vert G(\tilde z) \Vert + \lambda \Vert z_0-z^* \Vert,
\end{align*}
for any $\tilde z=(\tilde x, \tilde y)$, where $z^*=(x^*,y^*)$ is the stationary point of $f(x,y)$ and $z_0=(x_0,y_0)$.
\end{lem}
\noindent According to Assumption~\ref{asm:bounded-init} and Lemma~\ref{lem:anchoring}, setting $\lambda = \Theta(\epsilon/D)$ leads to finding an $\epsilon$-stationary point of $f(x,y)$ can be reduced into finding an $\fO(\epsilon)$-stationary point of $g(x,y)$, which can be done by applying RAIN (Algorithm \ref{alg:RAIN}) on the strongly-convex-strongly-concave function $g(x,y)$. 

\paragraph{Connection to Related Work}
We provide some discussion on comparing RAIN with related work.
\begin{itemize}
\item In convex optimization, \citet{allen2018make} proposed the recursive regularization technique for finding the nearly stationary point stochastically, which can be regarded as a special case of our anchored framework. However, \citeauthor{allen2018make}'s analysis depends on the convexity of the objective function, which is not suitable for minimax problem. 
In contrast, the analysis of RAIN is mainly based on the monotonicity of the gradient operator, which is more general than convex optimization. 
\item In minimax optimization, \citet{yoon2021accelerated,lee2021fast} considered variants of (stochastic) extragradient method by using initial point $z_0$ as the fixed anchor. 
In contrast, the proposed algorithm RAIN adjusts the anchoring point $z_s$ with iterations, which leads to the sequence of anchoring points $\{z_s\}$ converges to $z^*$. 
As a result, the RAIN achieves near-optimal SFO complexity in the stochastic setting (see Table \ref{tab:result}).
\item Several existing methods~\citep{lin2020near,zhao2022accelerated,kovalev2022first,luo2021near,yang2020catalyst} introduce the proximal point iteration 
\begin{align*}
    (x_{s+1},y_{s+1}) \approx \arg\min_{x\in\BR^{d_x}}\max_{y\in\BR^{d_y}} f(x,y) + \frac{\beta}{2}\|x-x_s\|^2,
\end{align*}
which is useful to establish the near-optimal algorithms for unbalanced minimax optimization in the offline scenario. 
However, it is questionable whether the one-sided regularization is helpful in finding near-stationary points in stochastic minimax problem~(\ref{prob:main}). 
\end{itemize}


\section{Complexity Analysis for RAIN}\label{sec:convex-concave}

In this section, we analyze the sub-problem solver Epoch Stochastic ExtraGradient (Epoch-SEG) and show our RAIN has near-optimal SFO upper bound for finding $\epsilon$-stationary point of stochastic convex-concave minimax problem. 

The procedure of Epoch-SEG (Algorithm \ref{alg:Epoch-SEG}) depends on the ordinary stochastic extragradient method (SEG, Algorithm~\ref{alg:SEG}), which has the following property.
\begin{lem}[SEG]\label{lem:SEG}
Suppose the function $f(x,y)$ is $L$-smooth and $\lambda$-strongly-convex-$\lambda$-strongly-concave, and the SFO $F(x,y;\xi)$ is unbiased and has variance bounded by $\sigma^2$.  
Then \SEG (Algorithm~\ref{alg:SEG}) holds that
\begin{align} \label{eq:SEG-step}
\lambda \mathbb{E}\Vert z_{t+1/2} - z^* \Vert^2 \le \frac{1}{\eta} \mathbb{E}\big[\Vert z_{t+1} - z^* \Vert^2 - \Vert z_t - z^* \Vert^2\big]  + 16 \eta \sigma^2
\end{align}
for any $0 < \eta < 1/(4L)$.
\end{lem}
\noindent Taking the average on (\ref{eq:SEG-step}) over $t=0,\dots,T-1$ and applying Lemma~\ref{lem:smon}, we know the output of SEG satisfies
\begin{align*}
    \mathbb{E}\Vert \bar z - z^* \Vert^2 \le \frac{1}{\lambda \eta T} \mathbb{E}\Vert z_0 - z^* \Vert^2 + \frac{16 \eta \sigma^2}{\lambda},
\end{align*}
which means SEG is able to decrease the distance from the output $\bar z$ to the optimal solution $z^*$ with iterations. However, it only converges to a neighborhood of $z^*$ by using the fixed stepsize.

\begin{algorithm*}[t]  
\caption{Epoch-SEG $(f, z_0, \lambda, L, N, K)$} 
\begin{algorithmic}[1] \label{alg:Epoch-SEG}
\STATE \textbf{for} $k = 0,1,\cdots,N-1$ \textbf{do}  \\[0.1cm]
\STATE \quad $z_{k+1} \leftarrow \text{SEG}\big(f,z_{k},\frac{1}{4L}, \frac{8L}{\lambda}\big)$ \\[0.1cm]
\STATE \textbf{for} $k = N,N+1,\cdots,N+K-1$ \textbf{do} \\[0.1cm]
\STATE \quad $z_{k+1} \leftarrow \text{SEG}\big(f,z_{k},\frac{1}{2^{k-N+3}L }, \frac{2^{k-N+5}L}{\lambda}\big)$ \\[0.1cm]
\STATE \textbf{return} $z_{N+K}$.
\end{algorithmic}
\end{algorithm*}

\begin{algorithm*}[t]  
\caption{SEG $(f, z_0, \eta, T)$} 
\begin{algorithmic}[1] \label{alg:SEG}
\STATE \textbf{for} $t = 0,1,\cdots,T-1$ \textbf{do} \\[0.1cm]
\STATE \quad $\xi_i \leftarrow $ a random index \\[0.1cm]
\STATE \quad $z_{t+1/2} \leftarrow z_t - \eta F(z_t;\xi_i)$ \\[0.1cm]
\STATE \quad $\xi_j \leftarrow $ a random index \\[0.1cm]
\STATE \quad $z_{t+1} \leftarrow z_t - \eta F(z_{t+1/2};\xi_j)$ \\[0.1cm]
\STATE \textbf{return} $\bar z$ by uniformly sampling from $\{z_{t+1/2} \}_{t=0}^{T-1}$ 
\end{algorithmic}
\end{algorithm*}

Then we consider the epoch stochastic extragradient (Epoch-SEG, Algorithm~\ref{alg:Epoch-SEG}), which consists of two phases and each of them calling SEG as subroutine by different parameters.
The Epoch-SEG targets to reduce both the optimization error and statistical error in the iterations:
\begin{itemize}
\item In the first phase, we call SEG by fixed stepsize and fixed iteration numbers to decrease the optimization error, which is related to the number $\kappa \triangleq L / \lambda$ and the distance $\Vert z_0 - z^* \Vert$. 
\item In the second phase, the statistical error aroused from the variance of stochastic oracle has accumulated. Hence, we call SEG by decreasing stepsizes and increasing iteration numbers to reduce the statistical error.
\end{itemize}
The formal theoretical guarantee of Epoch-SEG is shown in Lemma~\ref{lem:Epoch-SEG}.

\begin{lem}[Epoch-SEG] \label{lem:Epoch-SEG}
Suppose the function $f(x,y)$ is $L$-smooth and $\lambda$-strongly-convex-$\lambda$-strongly-concave, and the stochastic first-order oracle  $F(x,y;\xi)$ is an unbiased estimator of $F(x,y)$ and has variance bounded by $\sigma^2$. 
Then {\rm Epoch-SEG} (Algorithm \ref{alg:Epoch-SEG}) holds that
\begin{align*}
\mathbb{E}\Vert z_{N+K} - z^* \Vert^2 \le \underbrace{ \frac{1}{2^{N+2K}} \mathbb{E}\Vert z_0 - z^* \Vert^2}_{\text{optimization error}} + \underbrace{\frac{8 \sigma^2}{2^K \lambda L}.}_{\text{statistical error}}
\end{align*}
Additionally, the total number of SFO calls is no more than 
$16\kappa N + 2^{K+6}\kappa$, where $\kappa=L/\lambda$.
\end{lem}

Combining the above results, we obtain the SFO upper bound complexity of RAIN (Algorithm~\ref{alg:RAIN}) for the strongly-convex-strongly-concave case. 
\begin{thm}[RAIN, SCSC] \label{thm:RAIN}
Suppose the function $f(x,y)$ is $L$-smooth and $\lambda$-strongly-convex-$\lambda$-strongly-concave, and the stochastic first-order oracle  $F(x,y;\xi)$ is an unbiased estimator of $F(x,y)$ and has variance bounded by $\sigma^2$. 
If we run {\rm RAIN} (Algorithm~\ref{alg:RAIN}) with 
\begin{align} \label{eq:para-EA}
\begin{split}
N_0 = \begin{cases}
\left \lceil \log_2 \left(\dfrac{512 \lambda^2 S^2 D^2}{\epsilon^2} \right ) \right \rceil, & s=0, \\
3, & s\geq 1,
\end{cases}
\qquad\text{and}\qquad
K_s = \left \lceil \log_2 \left(\frac{2048 \lambda_s S^2 \sigma^2}{L \epsilon^2} \right)\right \rceil,
\end{split}
\end{align}
then the output $z_S$ satisfies $\mathbb{E}\Vert F(z_S) \Vert\le \epsilon$ and the total number of SFO calls is no more than
\begin{align*}
    \mathcal{O} \left( \frac{L}{\lambda} + \frac{L}{\lambda} \log \left( \frac{\lambda D }{\epsilon} \log \left( \frac{L}{\lambda}\right) \right) + \frac{\sigma^2}{\epsilon^2} \log^3 \left( \frac{L}{\lambda}\right)\right).
\end{align*}
\end{thm}

\noindent We can find nearly stationary points for the general convex-concave case by introducing the regularized function $g(x,y)$ defined in (\ref{dfn:g}). Applying Theorem~\ref{thm:RAIN} and Theorem \ref{lem:anchoring}, we achieve the SFO upper bound complexity as follows.

\begin{thm}[RAIN, CC] \label{thm:EA-C}
Suppose the function $f(x,y)$ is $L$-smooth and convex-concave, and the stochastic first-order oracle $F(x,y;\xi)$ is an unbiased estimator of $F(x,y)$ and has variance bounded by $\sigma^2$. Running {\rm RAIN} (Algorithm \ref{alg:RAIN}) on function
\begin{align*}
g(x,y) \triangleq f(x,y) + \frac{\lambda}{2} \Vert x - x_0 \Vert^2 - \frac{\lambda}{2} \Vert y - y_0 \Vert^2     
\end{align*}
with $\lambda = \min \left\{ \epsilon/D, L \right\}$ outputs a $3\eps$-stationary point in expectation, and the total number of SFO calls is no more than
\begin{align*}
    \mathcal{O} \left( \frac{L D}{\epsilon} + \frac{L D}{\epsilon} \log \log \left( \frac{L D}{\epsilon} \right)+ \frac{\sigma^2}{\epsilon^2} \log^3 \left(\frac{L D}{\epsilon} \right) \right).
\end{align*}
\end{thm}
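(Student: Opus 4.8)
The plan is to reduce the convex--concave problem to the strongly-convex-strongly-concave case already settled by Theorem \ref{thm:RAIN}, using the regularized surrogate $g$ together with the anchoring lemma (Lemma \ref{lem:anchoring}). First I would observe that the gradient operator of $g$ is $G(z) = F(z) + \lambda(z - z_0)$, so $g$ is $\lambda$-strongly-convex-$\lambda$-strongly-concave and $(L+\lambda)$-smooth; since $\lambda = \min\{\epsilon/D, L\} \le L$, this smoothness is at most $2L$. Moreover the natural stochastic oracle $G(z;\xi) = F(z;\xi) + \lambda(z-z_0)$ differs from $G(z)$ only through a deterministic term, so it remains unbiased with the same variance bound $\sigma^2$. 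Hence Theorem \ref{thm:RAIN} applies verbatim to $g$, with $\mu$ replaced by $\lambda$ and $L$ replaced by $2L$.

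Next, applying Lemma \ref{lem:anchoring} to the output $z_S$ gives $\mathbb{E}\Vert F(z_S)\Vert \le 2\,\mathbb{E}\Vert G(z_S)\Vert + \lambda\Vert z_0 - z^*\Vert \le 2\,\mathbb{E}\Vert G(z_S)\Vert + \lambda D$. Because $\lambda \le \epsilon/D$, the anchoring bias $\lambda D$ is at most $\epsilon$, so it suffices to run RAIN on $g$ until $\mathbb{E}\Vert G(z_S)\Vert$ is a small constant multiple of $\epsilon$; after rescaling $\epsilon$ by an absolute constant this yields $\mathbb{E}\Vert F(z_S)\Vert \le \epsilon$. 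Thus the task reduces to finding a $\Theta(\epsilon)$-stationary point of the $\lambda$-SCSC function $g$, which is exactly what Theorem \ref{thm:RAIN} costs out.

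A point requiring care is the distance parameter fed into Theorem \ref{thm:RAIN}: its bound depends on $\Vert z_0 - z_g^*\Vert$, where $z_g^*$ is the unique saddle of $g$, rather than on $D = \Vert z_0 - z^*\Vert$. I would control this by strong monotonicity: since $G(z^*) = \lambda(z^* - z_0)$ we have $\Vert G(z^*)\Vert \le \lambda D$, and $\lambda$-strong monotonicity of $G$ together with $G(z_g^*) = 0$ gives $\Vert z^* - z_g^*\Vert \le \Vert G(z^*)\Vert/\lambda \le D$, whence $\Vert z_0 - z_g^*\Vert \le 2D$ by the triangle inequality. Therefore the effective distance $D_g$ is $\Theta(D)$, and the product $\lambda D_g = \Theta(\lambda D) = O(\epsilon)$.

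Finally I would substitute $\lambda = \epsilon/D$ (the generic regime $\epsilon/D \le L$), smoothness $2L$, condition number $\kappa_g = 2L/\lambda = O(LD/\epsilon)$, and target accuracy $\Theta(\epsilon)$ into the three terms of Theorem \ref{thm:RAIN}. The leading term $L/\mu$ becomes $O(LD/\epsilon)$ and the variance term becomes $\tfrac{\sigma^2}{\epsilon^2}\log^3(LD/\epsilon)$. The only delicate simplification is the middle term $\tfrac{L_g}{\lambda}\log\!\big(\tfrac{\lambda D_g}{\epsilon}\log\tfrac{L_g}{\lambda}\big)$: by the previous paragraph $\lambda D_g = \Theta(\epsilon)$, so the inner argument collapses to $O(\log \kappa_g)$ and the term reduces to $\tfrac{LD}{\epsilon}\log\log\tfrac{LD}{\epsilon}$, reproducing the stated complexity. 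The remaining boundary case $\lambda = L$ (i.e.\ $LD \le \epsilon$) makes $\kappa_g = O(1)$, so the first two terms are $O(1)$ and only the $\sigma^2/\epsilon^2$ term survives, consistent with the claim. I expect the main obstacle to be exactly this log-factor bookkeeping — in particular justifying that the effective distance is $\Theta(D)$ rather than the crude $LD/\lambda$, since it is this cancellation $\lambda D_g \asymp \epsilon$ that keeps the middle term at $\log\log$ instead of a full $\log$.
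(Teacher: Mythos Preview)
Your proposal is correct and follows essentially the same reduction as the paper: apply the anchoring lemma (Lemma~\ref{lem:anchoring}) to trade the convex--concave problem for a $\lambda$-SCSC surrogate $g$, invoke Theorem~\ref{thm:RAIN} on $g$, and substitute $\lambda=\min\{\epsilon/D,L\}$. The only minor difference is in bounding the effective distance $\Vert z_0 - z_g^*\Vert$: the paper uses the non-expansiveness lemma (Appendix~A) to get $\Vert z_g^* - z_0\Vert \le \Vert z^* - z_0\Vert = D$ directly, whereas your strong-monotonicity detour yields $2D$---but this only affects absolute constants and your careful $\lambda D_g = \Theta(\epsilon)$ bookkeeping for the $\log\log$ term is exactly the point the paper glosses over.
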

\noindent The comparison in Table~\ref{tab:result} shows both the results of Theorem~\ref{thm:RAIN} and Theorem~\ref{thm:EA-C} are better than existing algorithms and nearly match the lower bound.
Additionally, the theoretical results in this section are not limited to convex-concave minimax optimization. 
In fact, our analysis is mainly based on the Lipschitz continuity and the monotonicity of the gradient operator, which means the results are also applicable to more general problems of variational inequality with Lipschitz continuous and monotone operator~\citep{rockafellar1970monotone}.

\section{Extension to Nonconvex-Nonconcave Settings}\label{sec:extension}
In this section, we extend RAIN to solve nonconvex-nonconcave minimax problems. 
We focus on two settings of comonotone and intersection dominant conditions.
The comonotonicity is defined as follows.
\begin{dfn}[comonotonicity]\label{dfn:com}
We say the operator $F(\cdot)$ is $\rho$-comonotone if there exists some $\rho\in\BR$ such that 
\begin{align*}
    (F(z) - F(z'))^\top (z - z') \ge \rho\Vert F(z) - F(z') \Vert^2
\end{align*}
for all $z$ and $z'$.  We also say $F(\cdot)$ is a negative comonotonic operator when $\rho <0$.
\end{dfn}
\begin{rmk}
In the case of $\rho=0$, the $\rho$-comonotonicity reduces to monotonicity shown in Lemma ~\ref{lem:mon}. In the case of $\rho<0$, the $\rho$-comonotone gradient operator allow the objective function be nonconvex-nonconcave. 
Typically, we additionally require $\rho\geq -\fO(1/L)$ in the convergence analysis for optimization algorithms in the negative comonotone setting~\citep{diakonikolas2021efficient,lee2021fast}.
\end{rmk}

\noindent The intersection dominant condition~\citep{grimmer2020landscape} also allows the objective function be nonconvex-nonconcave, which is defined as follows.
\begin{dfn}[intersection dominant condition] For a twice differentiable function $f(x,y)$, we say it satisfies the $(\tau,\alpha)$-intersection-dominant condition if there exist some $\tau>0$ and $\alpha>0$ such that
\begin{align*}
\nabla_{xx}^2 f(x,y) +  \nabla_{xy}^2 f(x,y)(\tau I - \nabla_{yy}^2 f(x,y))^{-1} \nabla_{yx}^2 f(x,y) \succeq \alpha I 
\end{align*}
and
\begin{align*}
-\nabla_{yy}^2 f(x,y) +  \nabla_{yx}^2 f(x,y)(\tau I + \nabla_{xx}^2 f(x,y))^{-1} \nabla_{xy}^2 f(x,y) \succeq \alpha I
\end{align*}
for all $(x,y)$.
\end{dfn}

\begin{rmk}
Typically, we also require $\tau\geq\Omega(L)$ in conditions of intersection dominant for the analysis for minimax optimization~\citep{grimmer2020landscape,lee2021fast}.
\end{rmk}

\begin{rmk}
For smooth minimax optimization, the intersection dominant condition is stronger than negative comonotonicity. Concretely, the intersection dominant condition needs the function $f(x,y)$ to be twice differentiable, which is unnecessary to negative comonotonicity. Furthermore, if the function $f(x,y)$ is $L$-smooth and satisfies the $(\tau,\alpha)$-intersection-dominant condition for some $\alpha>0$ and $\tau >L$, then its gradient operator $F(\cdot)$ must satisfy the $-1/\tau$-comonotonic condition (see Example 1 of \citet{lee2021fast}).
\end{rmk}

It turns out that both of these two conditions are related to the saddle envelope, which is a natural generalization of Moreau envelope for minimax problems.
\begin{dfn}[saddle envelope]
Given some $\tau >0$, we define the saddle envelope of function $f(x,y)$ as 
\begin{align*}
    f_{\tau}(x,y) \triangleq \min_{x' \in \BR^{d_x}} \max_{y' \in \BR^{d_y}} f(x',y') + \frac{\tau}{2} \Vert x' - x \Vert^2 - \frac{\tau}{2} \Vert y' - y \Vert^2.
\end{align*}
\end{dfn}
\noindent The saddle envelope has the following properties.
\begin{prop}\label{prop:id-scsc}
Suppose function $f(x,y)$ is $L$-smooth and satisfies the $(\tau,\alpha)$-intersection dominant condition for $\tau \ge 2L$, then $f_{2L}(x,y)$ is $\lambda$-strongly-convex-$\lambda$-strongly-concave, where 
\begin{align} \label{rela:alpha-mu}
    \lambda = \left( \frac{1}{2L} + \frac{1}{\alpha} \right)^{-1} = \Theta(\alpha).
\end{align}
\end{prop}

\begin{prop}\label{prop:cc-envelope}
Suppose the function $f(x,y)$ is $L$-smooth and convex-concave, and its gradient operator $F(x,y)$ is $-\rho$-comonotonic with $0 < \rho \le 1/(2L)$, then its saddle envelope $f_{2L}(x,y)$ is convex-concave.
\end{prop}

Based on the observation that the stationary points of $f_{2L}(x,y)$ are exactly the same as those of $f(x,y)$~\cite[Corollary 2.2]{grimmer2020landscape}, it is possible to apply RAIN on the saddle envelope $f_{2L}(x,y)$ for nonconvex-nonconcave minimax optimization. 
However, accessing the (stochastic) gradient operator of $f_{2L}(x,y)$ is non-trivial since it cannot be obtained by the gradient of $f(x,y)$ directly. 
Hence, we maintain the stochastic estimator for the gradient operator of $f_{2L}(x,y)$ as follows: 
\begin{enumerate}
\item We first denote the gradient operator of $f_{2L}(x,y)$ by
\begin{align*}
F_{2L}(x,y) = \begin{bmatrix}
\nabla_x f_{2L}(x,y) \\ -\nabla_y f_{2L}(x,y)
\end{bmatrix},
\end{align*}
which also can be written as \citep{grimmer2020landscape}:
\begin{align*}
    F_{2L} (z)  = F(z^+) = 2L (z - z^+),
\end{align*}  
where $z=(x,y)$ and 
\begin{align*}
z^+ = (x^+, y^+) = \arg \min_{x' \in \BR^{d_x}} \max_{y' \in \BR^{d_y}} f(x',y') + L \Vert x' - x \Vert^2 - L \Vert y' - y \Vert^2.
\end{align*}
\item Then we estimate $z^+$ by $\hat z^+$, which is obtained by solving the minimax problem:
\begin{align} \label{eq:envelope-sub}
    \hat z^+ = (\hat x^+, \hat y^+) &\approx \arg \min_{x' \in \BR^{d_x}} \max_{y' \in \BR^{d_y}} g(x',y'; x,y),
\end{align}
where
$g(x',y'; x,y) \triangleq f(x',y') + L \Vert x' - x \Vert^2 - L \Vert y' - y \Vert^2$.
\item Finally, we construct $\hat F_{2L}(z) = 2L(z-  \hat z^+)$ as the estimator of  $F_{2L}(z)$.
\end{enumerate}

We would like to regard $\hat F_{2L}(\,\cdot\,)$ as the stochastic estimator of $F_{2L}(\,\cdot\,)$, then it looks reasonable to run RAIN on the saddle envelope $f_{2L}(\cdot)$ by using its approximate gradient operator $F_{2L}(\,\cdot\,)$. Since the function $g(x',y';x,y)$ is strongly-convex in $x'$ and strongly-concave in $y'$, it is desired to obtain $\hat z^+$ by Epoch-SEG efficiently. 
However, directly using Epoch-SEG on $g$ only leads to $\mathbb{E}\Vert \hat z^+ - z^+ \Vert^2$ be small, while the output $\hat z^+$ may be a biased estimator of $z^+$. Consequently, the constructed $\hat F_{2L}(z)$ would also be biased, violating the assumption of unbiased stochastic oracle in RAIN.
We address this issue by the following strategies:
\begin{enumerate}
\item We propose Epoch-SEG$^+$ (Algorithm \ref{alg:Epoch-SEG+}) by integrating the step of multilevel Monte-Carlo (MLMC)~\citep{asi2021stochastic} with Epoch-SEG, which makes the bias of $\hat z^+$ decrease exponentially.  
The formal theoretical result is described in Theorem~\ref{thm:Epoch-SEG+}.
\item We show that RAIN also works for stochastic oracles with low bias. 
Since the basic component of RAIN is SEG, it is sufficient to analyze the complexity of SEG with low biased stochastic oracle. 
The formal theoretical result is described in Theorem \ref{thm:SEG-bias}.
\end{enumerate}



\begin{algorithm*}[t]  
\caption{Epoch-SEG$^{+}$ $(f, z_0, \lambda, L, N, K, M)$} 
\begin{algorithmic}[1] \label{alg:Epoch-SEG+}
\STATE \textbf{for} $m = 0,1,\cdots,M-1$ \textbf{do} \\[0.1cm]
\STATE \quad draw $J \sim {\rm Geom}\left(1/2\right)$ \\[0.1cm]
\STATE \quad  get $z_{m,N}, z_{m,N+J-1},z_{m,N+J}$ by $\text{Epoch-SEG}(f,z_0, \lambda,L,N,J)$ \\[0.1cm]
\STATE \quad $\hat z_m \leftarrow z_{m,N} + 2^J (z_{m,N+J} - z_{m,N+J-1}) \mathbb{I}\,[J \le K]$ \\[0.1cm]
\STATE \textbf{return} $\hat z \leftarrow \frac{1}{M} \sum_{m=0}^{M-1} \hat z_m$.
\end{algorithmic}
\end{algorithm*}

\begin{thm} \label{thm:Epoch-SEG+}
Suppose the function $f(x,y)$ is $L$-smooth and $\lambda$-strongly-convex-$\lambda$-strongly-concave, and the stochastic first-order oracle  $F(x,y;\xi)$ is unbiased estimator of $F(x,y)$ and has variance bounded with $\sigma^2$, then {\rm Epoch-SEG}$^+$ (Algorithm \ref{alg:Epoch-SEG+}) holds that
\begin{align*}
\Vert \mathbb{E}[\hat z^*] - z^*\Vert^2 \le \frac{1}{2^{N+2K}}  \mathbb{E}\Vert z_0 - z^* \Vert^2 + \frac{8 \sigma^2}{2^K \lambda L}
\end{align*}
and
\begin{align*}
\mathbb{E}\Vert \hat z^* - \mathbb{E} [\hat z^*] \Vert^2 
&\le \frac{22}{2^N M}\mathbb{E}\Vert z_0 - z^* \Vert^2 + \frac{112 K \sigma^2}{M \lambda L}.
\end{align*}
Additionally, the total number of SFO calls is no more than
$16 \kappa MN + 64MKL/\lambda$ in expectation.
\end{thm}

\begin{thm}[SEG with biased oracle] \label{thm:SEG-bias}
Suppose the function $f(x,y)$ is $L$-smooth and $\lambda$-strongly-convex-$\lambda$-strongly-concave. 
We run {\rm SEG} (Algorithm~\ref{alg:SEG}) on $f(x,y)$ and denote
\begin{align*}
b_t &\triangleq  \Vert \mathbb{E}F(z_t;\xi_i) - F(z_t) \Vert, \qquad~\, b_{t+1/2} \triangleq \Vert \mathbb{E}F(z_{t+1/2};\xi_j) - F(z_{t+1/2}) \Vert, \\
\sigma^2_t & \triangleq {\BE}\Vert F(z_t;\xi_i) - F(z_t) \Vert^2, \qquad \sigma^2_{t+1/2}  \triangleq {\BE}\Vert F(z_{t+1/2};\xi_j) - F(z_{t+1/2}) \Vert^2.
\end{align*}
Then it holds that
\begin{align*}
\mathbb{E}\big[\lambda \Vert z_{t+1/2} - z^* \Vert^2\big] 
&\le {\frac{1}{\eta} \mathbb{E}\big[\Vert z_{t} - z^* \Vert^2 - \Vert z_{t+1} - z^* \Vert^2\big]} - \frac{1}{2 \eta} \mathbb{E}\Vert z_{t+1} - z_{t+1/2} \Vert^2 \\
&\quad + \underbrace{6 \eta(e_t^2 +e_{t+1/2}^2)}_{\text{mean square error}} + \underbrace{\frac{2 b_{t+1/2}^2}{\lambda}}_{\text{bias}}
\end{align*}
for any $0 < \eta < 1/(4L)$, where $e_t^2 \triangleq b_t^2 + \sigma_{t}^2$ and $e_{t+1/2}^2 \triangleq b_{t+1/2}^2 + \sigma_{t+1/2}^2$.
\end{thm}
\noindent The above theorem suggests that if it satisfies
\begin{align} \label{eq:require-RAIN+}
\sigma_t^2 \le \frac{\sigma^2}{2}, \quad \sigma_{t+1/2}^2 \le \frac{\sigma^2}{2}, \quad b_t^2 \le \frac{\sigma^2}{2}, 
\quad \text{and} \quad b_{t+1/2}^2 \le 2 \lambda \eta \sigma^2,
\end{align}
we are able to obtain the result of (\ref{eq:SEG-step}) in Lemma~\ref{lem:SEG}. 
Then we can follow the complexity analysis of RAIN
to establish the theoretical guarantee of applying RAIN on the saddle envelope.\footnote{We only need to replace every $L$ with $6L$ in Theorem \ref{thm:RAIN} and Theorem \ref{thm:EA-C} since $f_{2L}(x,y)$ is $6L$-smooth.} 
As a result, we can find an $\epsilon$-stationary point of $f_{2L}(x,y)$:
\begin{itemize}
\item For $(\tau,\alpha)$-intersection-dominant condition with $\tau \ge 2L$, it requires $\tilde \fO(\sigma^2 \epsilon^{-2} + L/\alpha)$ times evaluations of $\hat F_{2L}(\,\cdot\,)$.
\item For $-\rho$-comonotone condition with $\rho \le 1/(2L)$, it requires 
$\tilde \fO(\sigma^2 \epsilon^{-2} + L/\epsilon)$ times evaluations of $\hat F_{2L}(\,\cdot\,)$.
\end{itemize}

Based on the above ideas, we proposed RAIN$^{++}$ in Algorithm \ref{alg:RAIN++:expand} as an extension of RAIN. 
The iterations of RAIN$^{++}$ consider the regularized function
\begin{align} \label{eq:env-s}
f_{2L}^{(s)}(x,y) \triangleq \begin{cases}
f_{2L} (x,y), & s=0, \\
\displaystyle{f_{2L}^{(s-1)}(x,y)+\frac{\lambda_s}{2}\|x-x_s\|^2-\frac{\lambda_s}{2}\|y-y_s\|^2}, & s\geq 1,
\end{cases}
\end{align}
Compared with the iterations of RAIN applying SEG to solve the sub-problem
\begin{align*}
\min_{x\in\BR^{d_x}}\max_{y\in\BR^{d_y}}f^{(s)}(x,y),    
\end{align*}
the iterations of RAIN$^{++}$ apply SEG${}^{++}$ to solve the sub-problem
\begin{align*}
\min_{x\in\BR^{d_x}}\max_{y\in\BR^{d_y}}f_{2L}^{(s)}(x,y).
\end{align*}
For SEG${}^{++}$, we apply Epoch-SEG${}^+$ (Epoch-SEG with the debias step of MLMC) to achieve the low-biased gradient estimation of the saddle envelope $f_{2L}^{(s)}(x,y)$.
Note that the sub-problems solved by Epoch-SEG${}^+$ (Line 5 and 9 of Algorithm \ref{alg:SEG++}) 
are well-conditioned because they are $3L$-smooth and $L$-strongly-convex-$L$-strongly-concave.
Therefore the SFO complexity for Epoch-SEG$^+$ only require the complexity of $\tilde \fO(1)$ to obtain a sufficiently accurate solution by carefully choosing the initialization and parameters. 
As a result, the total SFO complexity of RAIN$^{++}$ would be $\tilde \fO(1)$ times the one of RAIN.

\begin{algorithm*}[t]  
\caption{RAIN$^{++}$ $(f,z_0,\lambda, L ,\{ N_s\}_{s=0}^{S-1}, \{K_s \}_{s=0}^{S-1})$} 
\begin{algorithmic}[1] \label{alg:RAIN++:expand}
\STATE $\lambda_0 = \lambda \gamma, \quad S = \lfloor \log_{(1+\gamma)}(6 L/\lambda) \rfloor, \quad w_0 \leftarrow z_0$ \\[0.1cm]
\STATE \textbf{for} $s = 0,1,\cdots, S-1$ \\[0.1cm]
\STATE \quad $z_{s,0} \leftarrow z_s, \quad w_{s,0} \leftarrow w_s$ \\[0.1cm]
\STATE \quad \textbf{for} $k = 0,1,\cdots,N_s-1$ \textbf{do} \\[0.1cm]
\STATE \quad \quad $(z_{s,k+1},w_{s,k+1}) \leftarrow \text{SEG}^{++} \big(f,z_{s,k},w_{s,k}, \frac{1}{48 L}, \frac{96 L}{\lambda_s}, s, \{ z_i\}_{i=0}^s, \gamma, \lambda\big)$  \\[0.1cm]
\STATE \quad \textbf{for} $k = N_s,N_s+1,\cdots,N_s+K_s+1$ \textbf{do} \\[0.1cm]
\STATE \quad \quad $(z_{s,k+1},w_{s,k+1}) \leftarrow \text{SEG}^{++} \big(f,z_{s,k},w_{s,k}, \frac{1}{2^{k-N_s+3}\times 12 L}, \frac{2^{k- N_s +5} \times 12 L}{\lambda_s}, s, \{ z_i\}_{i=0}^s, \gamma, \lambda\big)$ 
\STATE \quad \textbf{end for} \\[0.1cm]
\STATE \quad $(z_{s+1},w_{s+1}) \leftarrow (z_{N_s+K_s},w_{N_s+K_s})$ \\[0.1cm]
\STATE \quad $\lambda_{s+1} \leftarrow (1+ \gamma) \lambda_s$ \\[0.1cm]
\STATE \textbf{end for} \\[0.1cm]
\STATE \textbf{return} $z_S$ 
\end{algorithmic}
\end{algorithm*}

Below, we present the theoretical result under the intersection-dominant condition.

\begin{thm}[RAIN${}^{++}$, ID] \label{thm:RAIN++-ID}
Suppose the function $f(x,y)$ is $L$-smooth and satisfies the $(\tau,\alpha)$-intersection-dominant condition with $\tau \ge 2L$, and the stochastic first-order oracle $F(x,y;\xi)$ is unbiased and has variance bounded by $\sigma^2$, then running {\rm RAIN}${}^{++}$ with $\gamma = 1$, $\lambda$ as defined in (\ref{rela:alpha-mu}) and
\begin{align*} 
N_s = 
\begin{cases}
\left \lceil \log_2 \left(\dfrac{512 \lambda^2 S^2 D^2}{\epsilon^2} \right ) \right \rceil, & s = 0,  \\[0.1cm]
3, & s \ge 1,
\end{cases}
\qquad\text{and}\qquad
K_s &= \left \lceil \log_2 \left(\frac{1024 \lambda_s S^2 \sigma^2}{3 L \epsilon^2} \right)\right \rceil
\end{align*}
holds that $\mathbb{E}\Vert F_{2L}(z_S) \Vert\le \epsilon$, where $\lambda$ is defined in (\ref{rela:alpha-mu}). Additionally, the total number of SFO calls is no more than $\tilde\fO(\sigma^2 \epsilon^{-2} + L / \alpha)$ in expectation.
\end{thm}
We can also use the regularization trick to obtain the theoretical guarantee in the negative comonotone setting. The idea is applying RAIN$^{++}$ on
\begin{align*}
    g_{2L}(x,y) \triangleq f_{2L}(x,y) + \frac{\lambda}{2} \Vert x - x_0 \Vert^2 - \frac{\lambda}{2} \Vert y -  y_0 \Vert^2
\end{align*}
for some $\lambda = \Theta(\epsilon/D)$. Note that the function $g_{2L}(x,y)$ is $\lambda$-strongly-convex-$\lambda$-strongly-concave and the algorithm solves the minimax sub-problems with objective functions
\begin{align} \label{eq:env-s-nc}
f_{2L}^{(s)}(x,y) \triangleq \begin{cases}
\displaystyle{f_{2L} (x,y) + \frac{\lambda}{2} \Vert x - x_0 \Vert - \frac{\lambda}{2} \Vert y - y_0 \Vert^2}, & s=0, \\
\displaystyle{f_{2L}^{(s-1)}(x,y)+\frac{\lambda_s}{2}\|x-x_s\|^2-\frac{\lambda_s}{2}\|y-y_s\|^2}, & s\geq 1,
\end{cases}
\end{align}
which is similar to what we have done in the general convex-concave case. 
We formally present the result in the following theorem.
\begin{thm}[RAIN${}^{++}$, NC] \label{thm:RAIN++-NC}
Suppose the function $f(x,y)$ is $L$-smooth and its gradient operator is $-\rho$-comonotone with $\rho \le 1/(2L)$, and the stochastic first-order oracle $F(x,y;\xi)$ is unbiased and has variance bounded by $\sigma^2$. If we run {\rm RAIN}$^{++}$ (Algorithm \ref{alg:RAIN++:expand}) with $\gamma = 1$ and $\lambda = \min\{ \epsilon/D, 6L \}$ then it holds that $\mathbb{E}\Vert F_{2L}(z_S) \Vert\le 3 \epsilon$.
Additionally, the total number of SFO calls is no more than $\tilde \fO(\sigma^2 \epsilon^{-2} + L { D} \epsilon^{-1})$.
\end{thm}

Corollary~\ref{thm:RAIN++-ID} and Corollary~\ref{thm:RAIN++-NC} indicate RAIN$^{++}$ can find an $\epsilon$-stationary point of the envelope $f_{2L}(x,y)$ in corresponding settings, which easily leads to a nearly-stationary point of $f(x,y)$ by the following proposition.

\begin{prop}\label{prop:final}
Suppose the function $f(x,y)$ is $L$-smooth and the point $\hat z = (\hat x, \hat y)$ is an $\epsilon$-stationary point of the function $f_{2L}(x,y)$, then we can find a $2\eps$-stationary point of $f(x,y)$ within $\fO(\log(\epsilon^{-1})+ \sigma^2 \epsilon^{-2})$ SFO complexity in expectation.
\end{prop}


\begin{algorithm*}[t]  
\caption{SEG${}^{++}$ $(f, z_0, w_0, \eta, T, s, \{\tilde z_i\}_{i=0}^s,\gamma, \lambda)$} 
\begin{algorithmic}[1] \label{alg:SEG++}
\STATE let $L$  be the smoothness coefficient of $f(x,y)$  \\[0.05cm]
\STATE set $N,K,M$ according to equation (\ref{value:NMK}) and $\hat z_{-1/2}^+ \leftarrow w_0$  \\[0.05cm]
\STATE \textbf{for} $t = 0,1,\cdots,T-1$ \textbf{do} \\[0.05cm]
\STATE \quad $g_t(x,y) \triangleq f(x,y) + L \Vert x - x_t \Vert^2 - L \Vert y - y_t \Vert^2$ \\[0.05cm] \label{line:gt}
\STATE \quad $\hat z_t^+ \leftarrow \text{Epoch-SEG}^{+} (g_t,\hat z_{t-1/2}^+,L, 3L,N,K,M)$ \\[0.05cm]
\STATE \quad $\hat F_{2L}^{(s)}(z_t) \leftarrow \begin{cases}
2L(z_t - \hat z_t^+) + \lambda \gamma \sum_{i=1}^s (1+\gamma)^i (z_t - \tilde z_i), & \text{ID case} \\
2L(z_t - \hat z_t^+) + \lambda \gamma \sum_{i=0}^s (1+\gamma)^i (z_t - \tilde z_i) , & \text{NC case}
\end{cases}$ \\[0.05cm]
\STATE \quad $z_{t+1/2} \leftarrow z_t - \eta \hat F_{2L}^{(s)}(z_{t})$ \\[0.05cm]
\STATE \quad $g_{t+1/2}(x,y) \triangleq f(x,y) + L \Vert x - x_{t+1/2}\Vert^2 - L \Vert y - y_{t+1/2} \Vert^2$ \\[0.05cm] \label{line:gt12}
\STATE \quad $\hat z_{t+1/2}^+ \leftarrow \text{Epoch-SEG}^{+} (g_{t+1/2},\hat z_{t}^+,L, 3L,N,K,M)$ \\[0.05cm]
\STATE \quad $\hat F_{2L}^{(s)}(z_{t+1/2}) \leftarrow 
\begin{cases}
2L(z_{t+1/2} - \hat z_{t+1/2}^+) + \lambda \gamma  \sum_{i=1}^s (1+\gamma)^i (z_{t+1/2} - \tilde z_i), & \text{ID case} \\
2L(z_{t+1/2} - \hat z_{t+1/2}^+) + \lambda \gamma  \sum_{i=0}^s (1+\gamma)^i (z_{t+1/2} - \tilde z_i), & \text{NC case}
\end{cases}$\\[0.05cm]
\STATE \quad $z_{t+1} \leftarrow z_t - \eta \hat F_{2L}^{(s)}(z_{t+1/2})$ \\[0.05cm]
\STATE \textbf{end for} \\[0.05cm]
\STATE Draw $J \sim {\rm Unif}([T])$ \\[0.05cm] 
\STATE \textbf{return} $(z_{J+1/2}, \hat z_{J+1/2})$ 
\end{algorithmic}
\end{algorithm*}

\section{The Lower Complexity Bounds}\label{sec:lower-bound}

We first provide the lower complexity bounds for finding near-stationary points of the stochastic convex-concave minimax problem.
Combining the ideas of \citet{luo2021near,foster2019complexity} and \citet{yoon2021accelerated}, we obtain Theorem~\ref{thm:lower-C} and Theorem~\ref{thm:lower-SC} for general convex-concave case and strongly-convex-strongly-concave case respectively, which nearly match the upper bounds shown in Theorem \ref{thm:RAIN} and Theorem~\ref{thm:EA-C}. 
Hence, the proposed RAIN (Algorithm~\ref{alg:RAIN}) is near-optimal. 


\begin{thm} \label{thm:lower-SC}
For any stochastic algorithm $\fA$ based on stochastic first-order oracle (SFO) under Assumption~\ref{asm:bounded-init} and parameters $L \ge 2, \lambda \le 1$ and $\epsilon \le 0.01 \lambda$, there exist an $L$-smooth and $\lambda$-strongly-convex-$\lambda$-strongly-concave function $f(x,y)$ that $\fA$ needs at least 
\begin{align*}
    \Omega \left( \sigma^2 \epsilon^{-2} \log(L \epsilon^{-1}) + \kappa \log(\lambda \epsilon^{-1})\right)
\end{align*}
SFO calls to find an $\epsilon$-stationary point of $f(x,y)$.
\end{thm}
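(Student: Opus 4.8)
The plan is to follow the template of the proof of Theorem~\ref{thm:lower-C}: I will establish two separate lower bounds---a purely optimization (noiseless) bound of $\Omega(\kappa\log(\mu\epsilon^{-1}))$ and a statistical (sample-complexity) bound of $\Omega(\sigma^2\epsilon^{-2}\log(L\epsilon^{-1}))$---and then combine them. The combination is realized by placing the two hard instances on orthogonal blocks of coordinates and taking their direct sum; since the squared gradient norm of a direct sum equals the sum of the blockwise squared gradient norms, any $\epsilon$-stationary point of the combined instance is automatically an $\epsilon$-stationary point of each block. Because a single oracle query reveals information about both blocks at once, the combined query complexity is at least the \emph{maximum} of the two blockwise bounds; and since $\max(a,b)\ge(a+b)/2$, a lower bound on the maximum is a lower bound of order $a+b$, giving the desired sum.

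For the statistical bound I would reuse the separable reduction from Theorem~\ref{thm:lower-C}, instantiated with a strongly convex base function. Let $f_{\mathrm C}$ be a $\mu$-strongly-convex, $L$-smooth function for which the stochastic stationary-point lower bound $\Omega(\sigma^2\epsilon^{-2}\log(L\epsilon^{-1}))$ holds; such an instance is supplied by the strongly-convex case of \cite{foster2019complexity}. Setting $f_{\mathrm{CC}}(x,y)\triangleq f_{\mathrm C}(x)-f_{\mathrm C}(y)$ yields a $\mu$-strongly-convex-$\mu$-strongly-concave, $L$-smooth function whose gradient operator is $F(x,y)=(\nabla f_{\mathrm C}(x),\nabla f_{\mathrm C}(y))$, so that $\Vert F(x,y)\Vert^2=\Vert\nabla f_{\mathrm C}(x)\Vert^2+\Vert\nabla f_{\mathrm C}(y)\Vert^2$, with the variance $\sigma^2$ inherited from the oracle of $f_{\mathrm C}$. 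Consequently an $\epsilon$-stationary point of $f_{\mathrm{CC}}$ furnishes $\epsilon$-stationary points of $f_{\mathrm C}$ in both the $x$- and $y$-coordinates, and the lower bound transfers verbatim.

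For the optimization bound I would exploit strong monotonicity. Since $F(z^*)=0$, the strong monotonicity inequality together with Cauchy--Schwarz gives $\Vert F(z)\Vert\ge\mu\Vert z-z^*\Vert$, so every $\epsilon$-stationary point satisfies $\Vert z-z^*\Vert\le\epsilon/\mu$. Hence finding an $\epsilon$-stationary point is at least as hard as locating a point within distance $\delta\triangleq\epsilon/\mu$ of the saddle point. I would then invoke a classical first-order lower bound for $\mu$-strongly-convex-$\mu$-strongly-concave (equivalently $\mu$-strongly-monotone, $L$-Lipschitz) problems: on a suitably conditioned quadratic/bilinear saddle instance the iterates of any first-order method remain in a Krylov-type subspace and cannot contract the distance to $z^*$ faster than $(1-\Theta(1/\kappa))^t$ per query, forcing $\Omega(\kappa\log(D/\delta))$ queries to reach distance $\delta$. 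Normalizing the instance so that $D=\Theta(1)$ turns this into $\Omega(\kappa\log(1/\delta))=\Omega(\kappa\log(\mu\epsilon^{-1}))$.

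The main obstacle is the optimization bound: I must exhibit a hard instance whose contraction rate is genuinely $1-\Theta(1/\kappa)$ (linear, not $1-\Theta(1/\sqrt\kappa)$), since the coupled saddle structure---unlike pure minimization---precludes acceleration, and I must confirm that the resisting-oracle/Krylov argument survives the two-queries-per-step extragradient access pattern used by SFO algorithms. The regime assumptions keep the logarithms well-posed: $\epsilon\le0.01\mu$ gives $\mu/\epsilon\ge100$, so $\delta\le0.01$ and $\log(\mu\epsilon^{-1})=\Theta(\log(1/\delta))>0$, while $L\ge2$ and $\mu\le1$ force $\kappa\ge2$; these also guarantee $\log(L\epsilon^{-1})>0$ so the statistical term is nontrivial, and that the $\Theta(1)$ additive constants from normalization do not swamp either logarithm. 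Combining the two blockwise bounds through the direct-sum construction of the first paragraph then yields the claimed $\Omega(\sigma^2\epsilon^{-2}\log(L\epsilon^{-1})+\kappa\log(\mu\epsilon^{-1}))$.
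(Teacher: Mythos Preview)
Your proposal is correct and follows essentially the same approach as the paper: both split the bound into an optimization term and a statistical term, use the strong-monotonicity inequality $\Vert F(z)\Vert\ge\mu\Vert z-z^*\Vert$ to reduce the optimization term to a distance-to-saddle lower bound (the paper cites \cite{luo2021near} for the $\Omega(\kappa\log(\mu\epsilon^{-1}))$ result you describe via the Krylov/resisting-oracle picture), and reuse the separable construction $f_{\mathrm{CC}}(x,y)=f_{\mathrm C}(x)-f_{\mathrm C}(y)$ with \cite{foster2019complexity} for the statistical term. Your treatment is in fact more careful than the paper's in two places: you explicitly invoke the strongly-convex case of \cite{foster2019complexity} (the paper just says ``follow the proof of Theorem~\ref{thm:lower-C}''), and you spell out the direct-sum-on-orthogonal-blocks argument and the $\max(a,b)\ge(a+b)/2$ step to justify the additive form of the bound, whereas the paper simply writes ``adding up the optimization complexity and statistical complexity completes the proof.''
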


\begin{thm} \label{thm:lower-C}
For any stochastic algorithm $\fA$ based on stochastic first-order oracle (SFO) under Assumption~\ref{asm:bounded-init} and parameters of $L\ge 2$ and $\epsilon \le 0.01$, there exists an $L$-smooth and convex-concave function $f(x,y)$ such that $\fA$ needs at least 
\begin{align*}
    \Omega \left( \sigma^2 \epsilon^{-2} \log(L \epsilon^{-1}) + L { D}\epsilon^{-1} \right)
\end{align*}
SFO calls to find an $\epsilon$-stationary point of $f(x,y)$.
\end{thm}

Since the gradient operator of any convex-concave function is negative comonotone, the lower bound shown in Theorem~\ref{thm:lower-SC} is also valid for the problem under the negative comonotone condition.

\begin{cor} \label{cor:lower-NC}
For any stochastic algorithm $\fA$ based on stochastic first-order oracle (SFO) under Assumption~\ref{asm:bounded-init} and parameters of $L \ge 2$ and $\epsilon \le 0.01$, there exist an $L$-smooth function $f(x,y)$ whose gradient operator is negative comonotone such that $\fA$ needs at least
\begin{align*}
    \Omega \left( \sigma^2 \epsilon^{-2} \log(L \epsilon^{-1}) + L { D} \epsilon^{-1} \right)
\end{align*}
SFO calls to find an $\epsilon$-stationary point of $f(x,y)$.
\end{cor}

\noindent Similarly, Theorem~\ref{thm:lower-C} leads to the lower bound for intersection-dominant condition.

\begin{cor}\label{cor:lower-ID}
For any stochastic algorithm $\fA$ based on stochastic first-order oracle (SFO) under Assumption~\ref{asm:bounded-init} and parameters of $L\ge 2$, $\alpha \le 1$ and $\epsilon \le 0.01 \alpha$, there exist an $L$-smooth function $f(x,y)$ satisfying the $(\tau,\alpha)$-intersection-dominant condition with $\tau \ge 2L$ such that $\fA$ needs at least
\begin{align*}
    \Omega \left( \sigma^2 \epsilon^{-2} \log(L \epsilon^{-1}) + L \alpha^{-1} \log(\alpha \epsilon^{-1})\right)
\end{align*}
SFO calls to find an $\epsilon$-stationary point of $f(x,y)$.
\end{cor}

\noindent Corollary \ref{cor:lower-NC} and Corollary \ref{cor:lower-ID} suggest that the proposed RAIN$^{++}$ (Algorithm~\ref{alg:RAIN++:expand}) is also near-optimal in negative comonotone and intersection-dominant conditions respectively.

\begin{figure}[t]
    \centering
    \begin{tabular}{ccc}
    \includegraphics[scale=0.31]{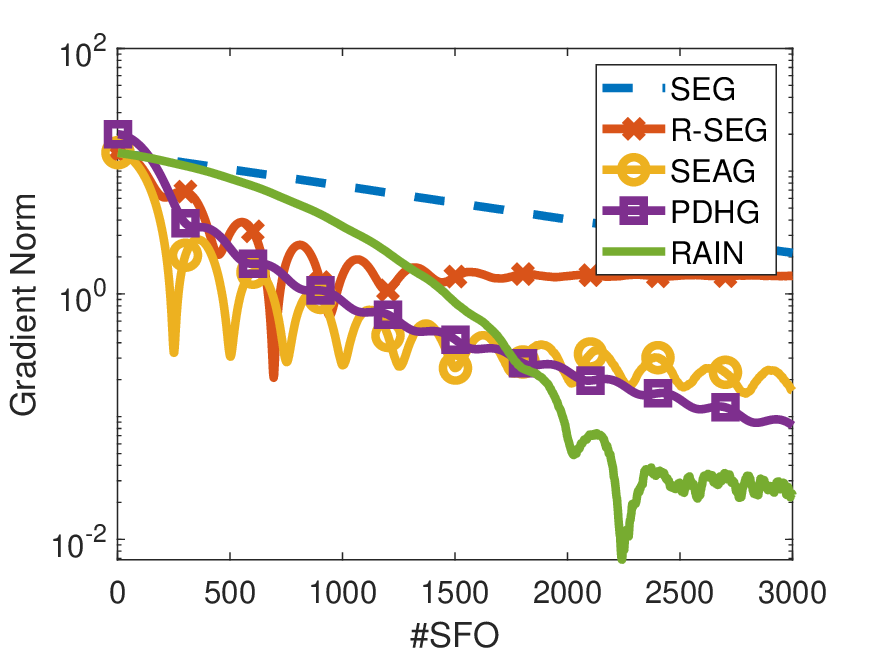} &
    \includegraphics[scale=0.31]{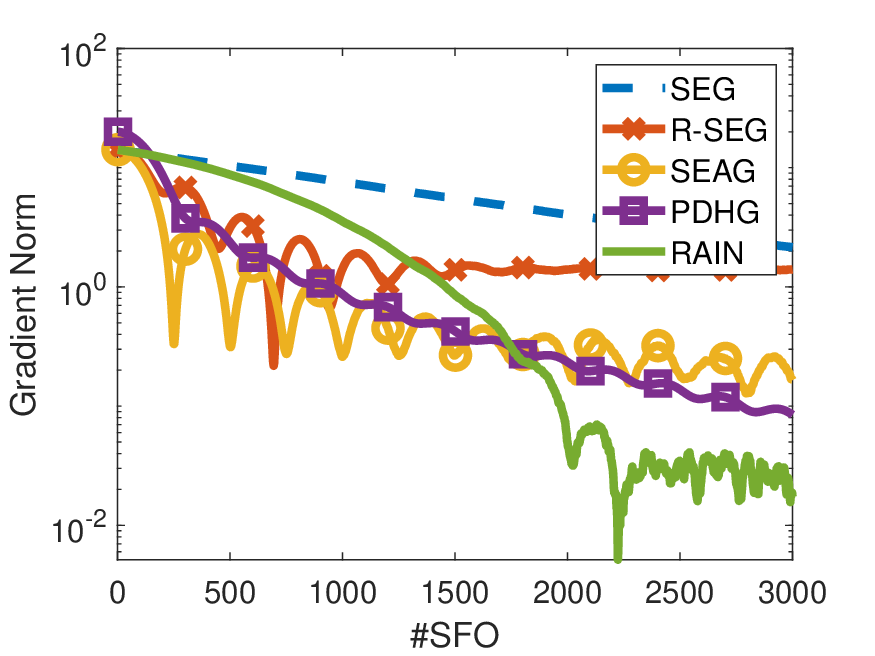}  & \includegraphics[scale=0.31]{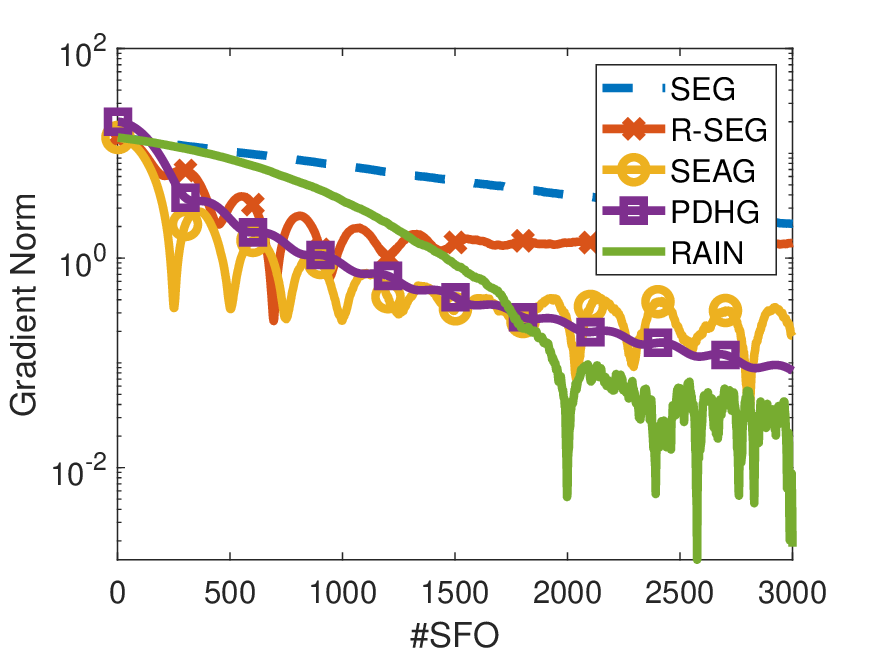} \\
    (a) $\sigma = 0.001$ & (b) $\sigma = 0.002$ & (c) $\sigma = 0.005$ \\
    \end{tabular}
    \caption{The results of the number of SFO calls against gradient norm on problem (\ref{func-xy}).}
    \label{fig:Exp-xy} \vskip0.3cm
\end{figure}

\begin{figure}[t]
    \centering
	\begin{tabular}{ccc}
		\includegraphics[scale=0.31]{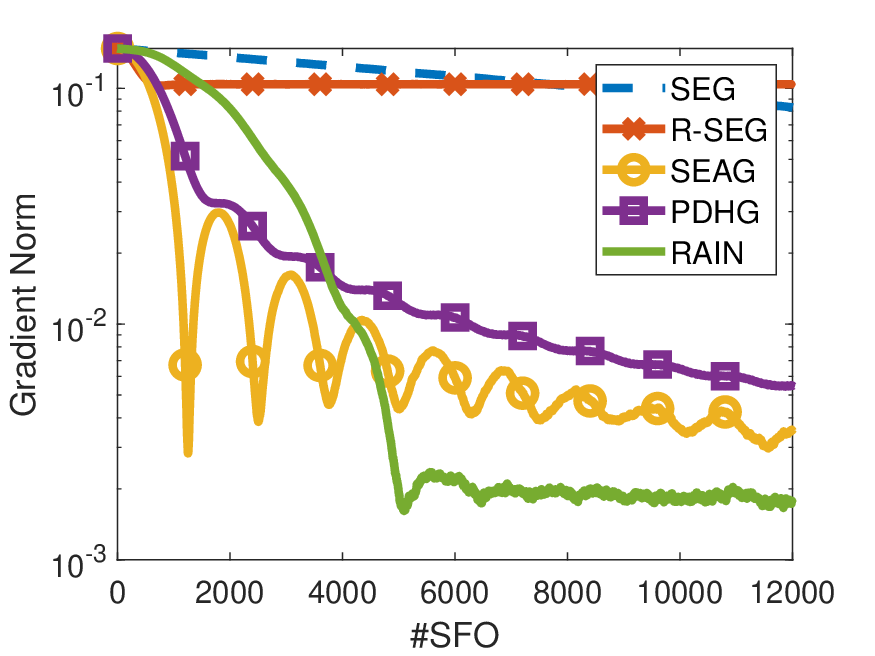} 
		&\includegraphics[scale=0.31]{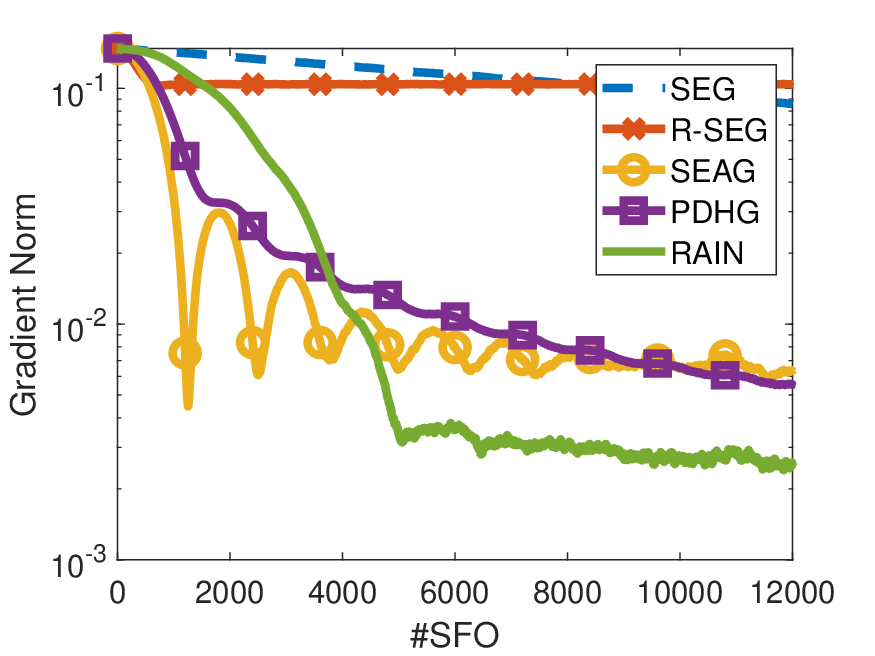} 
		&\includegraphics[scale=0.31]{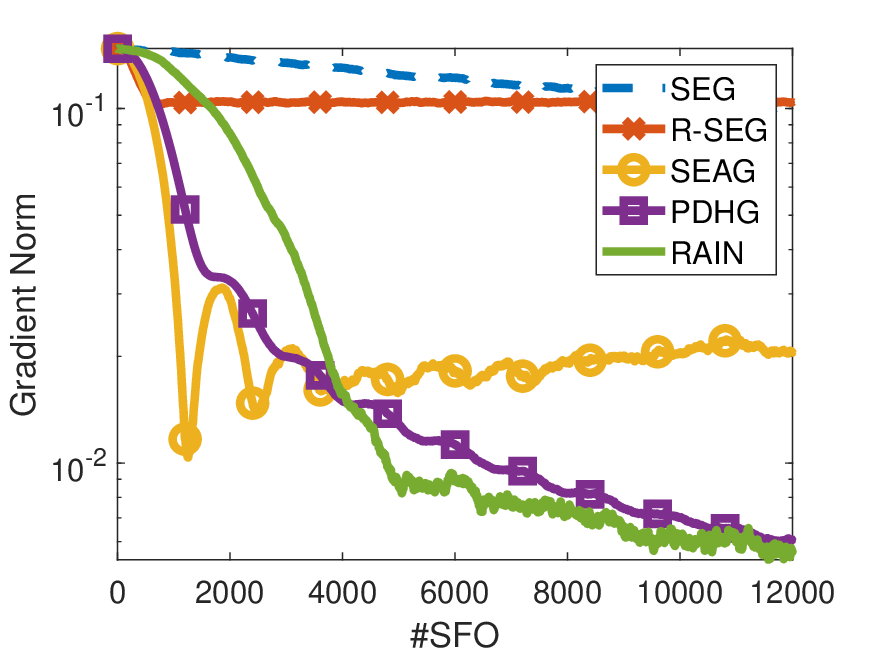} \\
        (a) $\sigma = 0.001$ & (b) $\sigma = 0.002$  & (c) $\sigma = 0.005$ \\
	\end{tabular} 
    \caption{The results of the number of SFO calls against gradient norm on problem (\ref{func-delta}). 
    SEAG diverges in (c), which does not contradict its convergence guarantee as 
    the condition $\sigma_k^2 \le \epsilon / (k+1)$ in Theorem 6.1~\citep{lee2021fast} is unsatisfied.
    }
    \label{fig:Exp-delta}
\end{figure}

\section{Numerical Experiments}

In this section, we compare our algorithms with baselines on both convex-concave and nonconvex-nonconcave minimax problems.

\subsection{The Convex-Concave Case}
We conduct numerical experiments by comparing our RAIN with the following baselines:
\begin{itemize}
    \item SEG: the ordinary stochastic extragradient;
    \item R-SEG: the stochastic extragradient with regularization trick \citep{nesterov2012make};
    \item SEAG: the stochastic extra anchored gradient  \citep[Algorithm 3]{lee2021fast};
    \item {PDHG: the primal-dual hybrid gradient \citep[Algorithm 1]{zhao2022accelerated}}. 
\end{itemize}
The experiments consider two minimax problems as follows.
\begin{itemize}
\item The first one is the bilinear minimax problem:
\begin{align}\label{func-xy}
    \min_{x\in\BR^d} \max_{y\in\BR^d} f_{\rm b}(x,y) \triangleq x^\top y,
\end{align}
which reveals some important issues in minimax optimization. 
For example, the duality gap is not well defined except at its unique saddle point $(0,0)$; the classical method stochastic gradient descent ascent diverges and other first-order algorithms also converge slowly due to the cycling behaviors \citep{pethick2022escaping}. 
\item The second one is the hard case of convex-concave minimax problem
\begin{align} \label{func-delta}
    \min_{x\in\BR^d} \max_{y\in\BR^d} f_{\delta, \nu}(x,y) \triangleq (1 - \delta) g_{\nu}(x) + \delta x^\top y - (1 - \delta) g_{\nu} (y), 
\end{align}
where 
\begin{align*}
    g_{\nu}(u_i) = 
    \begin{cases}
    \nu \vert u_i \vert  -  \frac{1}{2} \nu^2 , & \vert u_i \vert \ge \nu, \\
    \frac{1}{2} u_i^2 , & \vert u_i \vert < \nu,
    \end{cases}
\end{align*}
and we set $\nu = 5 \times 10^{-5}$ and $\delta = 10^{-2}$ by following~\citet{yoon2021accelerated}'s setting.
\end{itemize}
We use the stochastic first-order oracles
\begin{align*}
 F_b(x,y;\xi) = F_b(x,y) + \xi 
\qquad \text{and} \qquad
 F_{\delta,\nu}(x,y;\xi) = F_{\delta,\nu}(x,y) + \xi 
\end{align*}
for problems (\ref{func-xy}) and (\ref{func-delta}) respectively, where $F_b(x,y)$ is the gradient operator of $f_b(x,y)$, $F_{\delta,\nu}(x,y)$ is the gradient operator of $f_{\delta,\nu}(x,y)$ and $\xi\sim\fN(0,\sigma^2I_{2d})$.
We set $d = 1000$ for problem (\ref{func-xy}) and $d=100$ for problem (\ref{func-delta}). 
We provide the detailed implementation for the algorithms in Appendix~\ref{apx:exp} and the source code is available\footnote{\url{https://github.com/TrueNobility303/RAIN}}.
We present the experimental results under different levels of noise in Figure~\ref{fig:Exp-xy} and Figure~\ref{fig:Exp-delta}, which shows the proposed RAIN obviously performs better than baseline methods.

\subsection{The Nonconvex-Nonconcave Case}

We also consider the 
nonconvex-nonconcave problem 
\begin{align} \label{func-rho}
    \min_{x\in\BR}\max_{y\in\BR} f_{\rho,L}(x,y) \triangleq \frac{\rho L^2}{2} x^2 + L \sqrt{1 - \rho^2 L^2} xy  -\frac{\rho L^2}{2} y^2,
\end{align} 
where $L,\rho>0$ such that $\rho L<1$.
According to the verification of \citet{lee2021fast}, the gradient operator of function $f_{\rho,L}(x,y)$ is $L$-Lipschitz continuous and $\rho$-comonotone. 
We use the stochastic first-order oracles 
\begin{align*}
F_{\rho,L}(x,y;\xi) = f_{\rho,L}(x,y) + \xi
\end{align*}
in our experiments, where $\xi \sim \fN(0,\sigma^2 I_2)$.

We compare our proposed RAIN${}^{++}$ with the following baselines:
\begin{itemize}
    \item SEG${}^+$: the extension of SEG under the negative comonotonicity assumption ~\citep[Equation (EG${}_{p}^+$)]{diakonikolas2021efficient};
    \item SFEG: the stochastic fast extra gradient method \citep[Algorithm 1]{lee2021fast}, where we replace the exact gradient operator with the stochastic gradient operator. 
\end{itemize}
We test the algorithms on Problem (\ref{func-rho}) with $L=1$ and $\rho =-{1}/{8\sqrt{2}}$ or $ -{1}/{3}$, and the experimental results are shown in  
Figure \ref{fig:Exp-rho} and \ref{fig:Exp-rho-div}, respectively.
The implementation details of the algorithm are deferred to Appendix \ref{apx:exp}. We can observe that our proposed RAIN${}^{++}$ performs better than baselines.
We remark that the convergence analysis of SFEG by \citet{lee2021fast} requires either $\rho = 0$ or $\sigma=0$, while it also works on our stochastic nonconvex-nonconcave problem with $\rho>0$ and $\sigma>0$ in practice.
Note that Figure \ref{fig:Exp-rho-div} shows that SEG${}^+$ diverges on Problem (\ref{func-rho}) with $\rho=-1/3$, which is also observed by the empirical results of \citet{lee2021fast}.
The reason is the convergence guarantee of SEG${}^+$ \citep[Theorem 4.5]{diakonikolas2021efficient}) requires the condition $\rho \in [-{1}/{(4 \sqrt{2} L)}, 0)$, which is not satisfied in the setting of $\rho=-1/3$.

\begin{figure}[t]
    \centering
    \begin{tabular}{ccc}    
    \includegraphics[scale=0.31]{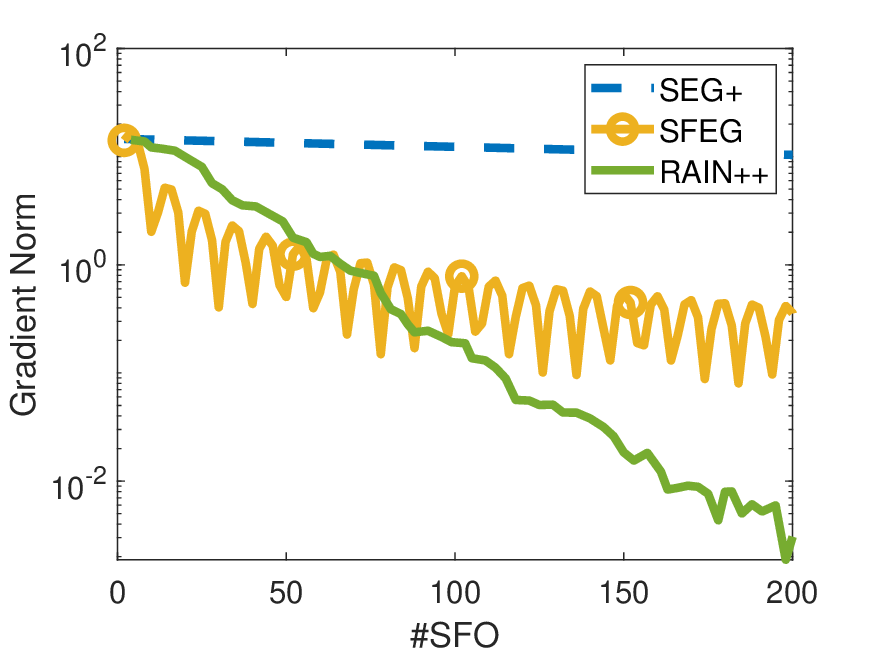} &\includegraphics[scale=0.31]{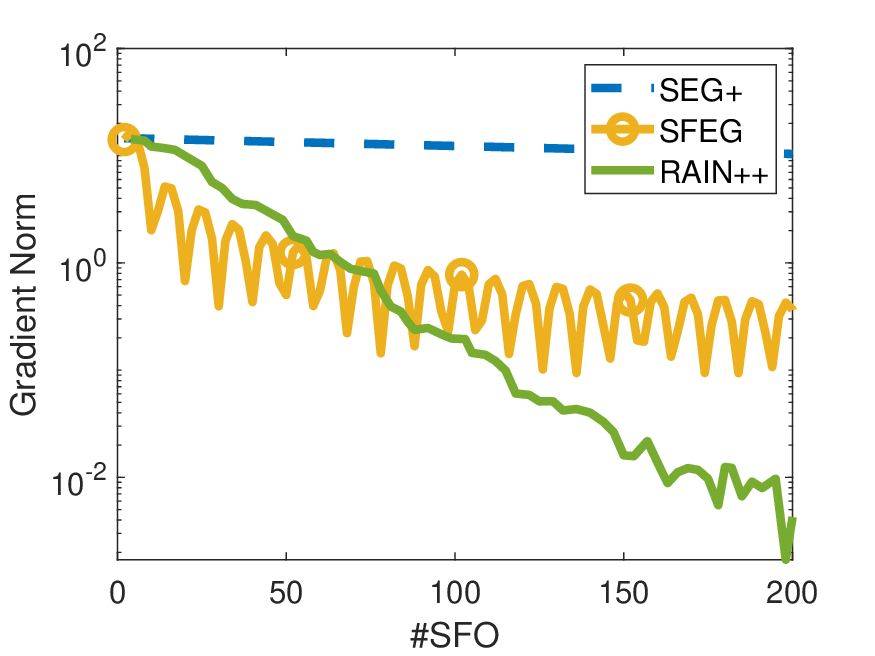}  & \includegraphics[scale=0.31]{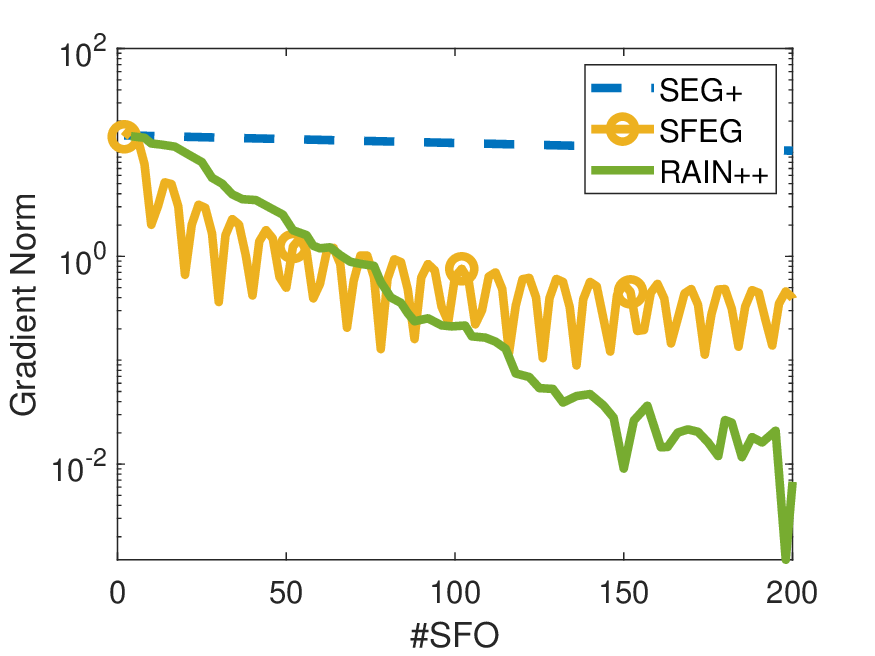} \\
    (a) $\sigma = 0.001$ & (b) $\sigma = 0.002$ & (c) $\sigma = 0.005$ \\
    \end{tabular}
    \caption{The results of the number of SFO calls against gradient norm on problem (\ref{func-rho}) with $L=1$ and $\rho = - {1}/{(8 \sqrt{2})}$.}
    \label{fig:Exp-rho}
\end{figure}

\begin{figure}[htbp]
    \centering
    \begin{tabular}{ccc}
    \includegraphics[scale=0.31]{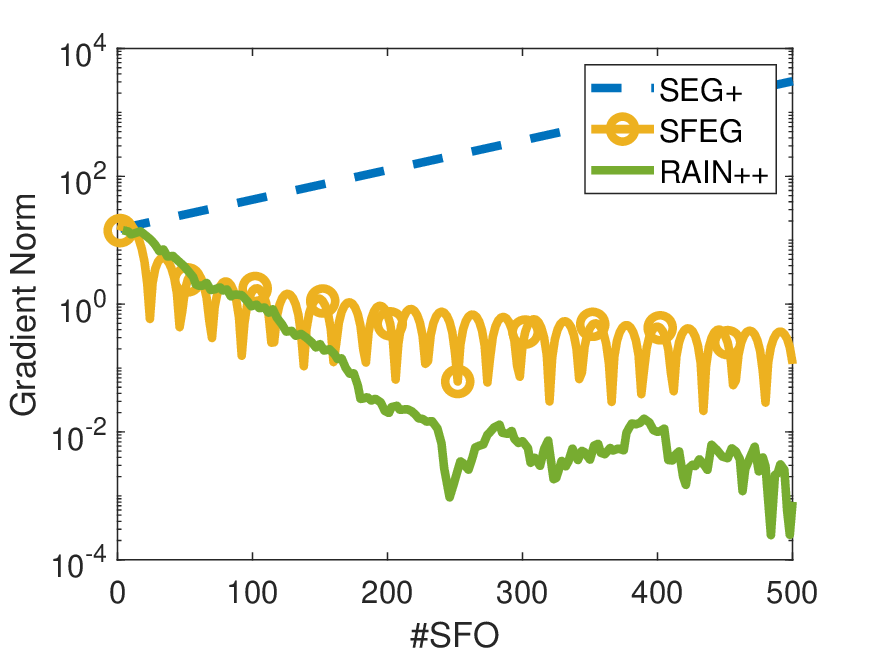} &\includegraphics[scale=0.31]{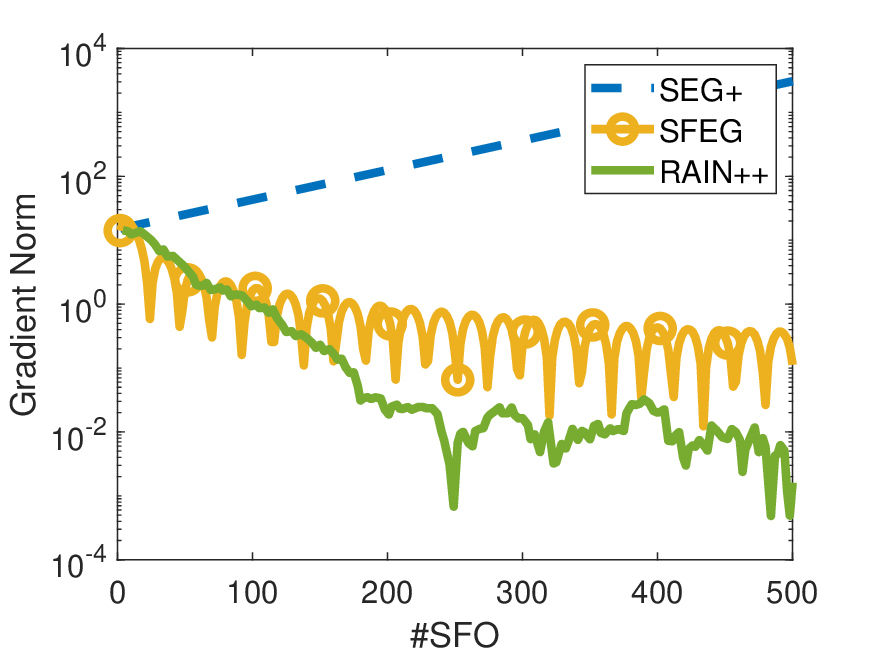}  & \includegraphics[scale=0.31]{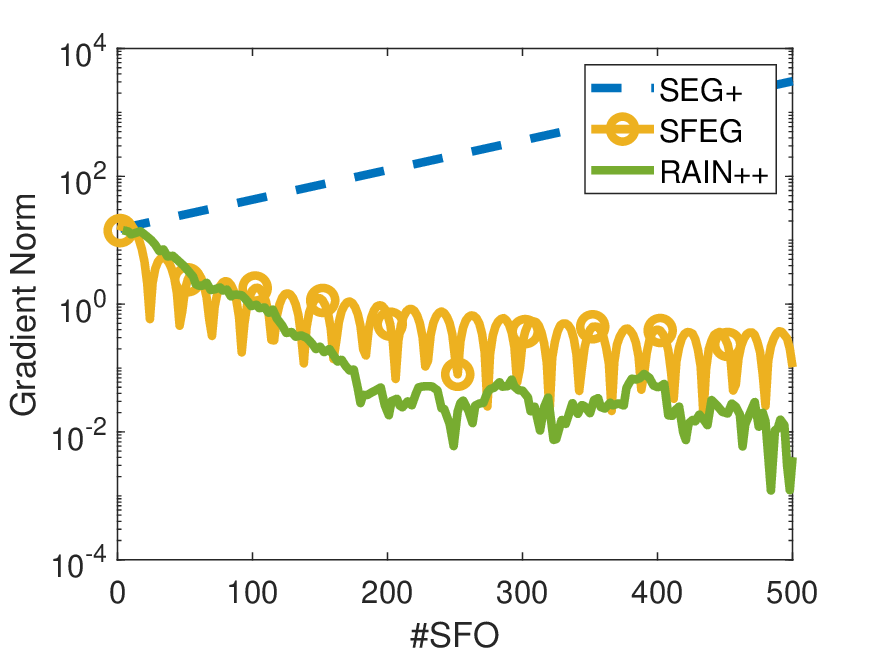} \\
    (a) $\sigma = 0.001$ & (b) $\sigma = 0.002$ & (c) $\sigma = 0.005$ \\
    \end{tabular}
    \caption{The results of the number of SFO calls against gradient norm on problem (\ref{func-rho}) with $L=1$ and $\rho = - {1}/3$.
    \label{fig:Exp-rho-div}}
\end{figure}



\section{Conclusion}
In this work, we propose the Recursive Anchor IteratioN (RAIN) algorithm for stochastic minimax optimization. The theoretical analysis has shown that the framework of RAIN with appropriate sub-problem solvers could achieve the near-optimal SFO complexity for finding nearly stationary points in convex-concave and strongly-convex-strongly-concave minimax optimization problems. 
We also extend the idea of RAIN to solve two specific nonconvex-nonconcave minimax problems and the proposed method RAIN$^{++}$ also achieves the near-optimal SFO complexity in these settings.

\section*{Acknowledgments}
This work was done while Lesi Chen was at Fudan University.
This work is supported by National Natural Science Foundation of China (No. 62206058),
Shanghai Sailing Program (22YF1402900), Shanghai Basic Research Program (23JC1401000), 
and the Major Key Project of Pengcheng Laboratory (No. PCL2024A06)


\newpage
\appendix
\section{The Proofs in Section~\ref{sec:RAIN}}

We first provide the non-expansiveness lemma, then give detailed proofs for results in Section~\ref{sec:RAIN}.

\begin{lem}[non-expansiveness]  \label{lem:non-ex}
Suppose the operator $F(\,\cdot\,)$ is monotone. 
We define $G(z) \triangleq F(z) + \lambda (z - z_0) $ for some $\lambda >0$ and $z_0 \in \BR^d$,  then it holds that
\begin{align*}
\Vert w^* - z_0 \Vert \le \Vert z^* - z_0 \Vert
\qquad\text{and}\qquad
\Vert w^* - z^* \Vert \le \Vert z^* - z_0 \Vert,
\end{align*}
where $z^*$ is the solution to $F(z) = 0$ and $w^*$ is the solution to $G(w) = 0$. 
\begin{proof}
The monotonicity of $F$ means $G$ is $\lambda$-strongly monotone. Therefore, the operator $G$ must has an unique solution $w^*$. Then we have
\begin{align*}
    & \lambda \Vert w^* - z^* \Vert^2 \\
    \le& G(z^*)^\top (z^* - w^*) \\
    =& (F(z^*) + \lambda (z^* - z_0) )^\top (z^* - w^*)  \\
    =& \lambda (z^* - z_0)^\top (z^* - w^*) \\
    =& \frac{\lambda}{2} \Vert w^* - z^* \Vert^2 + \frac{\lambda}{2} \Vert z^* - z_0 \Vert^2 - \frac{\lambda}{2} \Vert w^* - z_0 \Vert^2,
\end{align*}
which implies the result of this lemma.
\end{proof}
\end{lem}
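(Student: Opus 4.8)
The plan is to derive a single quadratic inequality among the three pairwise squared distances of $z_0$, $z^*$, and $w^*$, from which both stated bounds drop out at once. First I would record that strong monotonicity of $G$ is inherited directly from monotonicity of $F$: since $G(z)-G(z') = F(z)-F(z') + \lambda(z-z')$, the monotonicity inequality for $F$ together with the added quadratic term yields $(G(z)-G(z'))^\top(z-z') \ge \lambda\Vert z-z'\Vert^2$, so $G$ is $\lambda$-strongly monotone. This already gives uniqueness of any zero $w^*$; existence holds in the standing setting where $F$ is continuous (indeed Lipschitz), so $G$ is coercive.

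The main computation applies this strong monotonicity to the pair $(z^*, w^*)$. Using $G(w^*)=0$ I obtain $G(z^*)^\top(z^* - w^*) \ge \lambda\Vert z^* - w^*\Vert^2$. The crucial simplification is that $F(z^*)=0$ forces $G(z^*) = \lambda(z^* - z_0)$, so after cancelling $\lambda$ the left-hand side reduces to the cross term $(z^* - z_0)^\top(z^* - w^*)$, and the working inequality becomes $(z^* - z_0)^\top(z^* - w^*) \ge \Vert z^* - w^*\Vert^2$.

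The final step is to expand the cross term by the polarization identity $a^\top b = \tfrac12(\Vert a\Vert^2 + \Vert b\Vert^2 - \Vert a-b\Vert^2)$ with $a = z^* - z_0$ and $b = z^* - w^*$, observing that $a-b = w^* - z_0$. Substituting and rearranging the resulting inequality produces $\Vert z^* - z_0\Vert^2 \ge \Vert w^* - z_0\Vert^2 + \Vert z^* - w^*\Vert^2$; discarding either nonnegative term on the right-hand side immediately gives the two claimed bounds $\Vert w^* - z_0\Vert \le \Vert z^* - z_0\Vert$ and $\Vert w^* - z^*\Vert \le \Vert z^* - z_0\Vert$ simultaneously.

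I do not anticipate a genuine obstacle; the computation is short once the right identity is used. The only point requiring care is recognizing that the single polarization step is precisely what couples the two inequalities into one source estimate, so there is no need to argue them separately. A secondary subtlety worth stating explicitly is that strong monotonicity of $G$, and hence uniqueness of $w^*$, relies only on the hypothesized monotonicity of $F$ and the regularizer, requiring no convex-concave or smoothness structure beyond continuity for existence.
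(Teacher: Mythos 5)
Your proposal is correct and follows essentially the same route as the paper's proof: apply $\lambda$-strong monotonicity of $G$ to the pair $(z^*,w^*)$, use $F(z^*)=0$ to reduce $G(z^*)$ to $\lambda(z^*-z_0)$, and expand the cross term by polarization to obtain $\Vert z^*-z_0\Vert^2 \ge \Vert w^*-z_0\Vert^2 + \Vert w^*-z^*\Vert^2$, from which both bounds follow. Your version is in fact slightly cleaner than the paper's, since you state the combined Pythagorean-type inequality explicitly and remark on existence of $w^*$ via continuity and coercivity, which the paper leaves implicit.
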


\subsection{The Proof of Lemma \ref{lem:RAL}}
\begin{proof}
Let $F^{(s)}(\cdot)$ be the gradient operator of $f^{(s)}(x,y)$. It is clear that $F^{(s)}(\cdot)$ is equivalent to the following definition:
\begin{align*}
    F^{(s)}(z) \triangleq F(z) + \lambda \sum_{i=1}^s 2^i (z - z_i), \quad s = 0,\cdots, S-1.
\end{align*}
Based on the setting of $S = \lfloor \log_2 (L/\lambda) \rfloor$, we have
\begin{align*}
    \lambda 2^S \le L \le \lambda 2^{S+1},
\end{align*}
then it holds that
\begin{align*}
\lambda + \lambda \sum_{i=1}^s 2^i > 2^s \lambda
\end{align*}
and
\begin{align*}
\quad L + \lambda \sum_{i=1}^{S-1} 2^i \le 2L.
\end{align*}
Hence, we know that each $F^{(s)}$ is at least $2^s \lambda$-strongly monotone and $2L$-smooth. Further, we have
\begin{align*}
    \Vert F(z_S) \Vert & \le \Vert F^{(S-1)}(z_S) \Vert + \lambda \sum_{i=1}^{S-1} 2^i \Vert z_S - z_i \Vert \\
    &\le  \Vert F^{(S-1)}(z_S) \Vert + \lambda \sum_{i=1}^{S-1} 2^i \Vert z_S - z_{S-1}^* \Vert + \lambda \sum_{i=1}^{S-1} 2^i \Vert z_{S-1}^* - z_i \Vert \\
    &\le 2 \Vert F^{(S-1)}(z_S) \Vert + \lambda \sum_{i=1}^{S-1} 2^i \Vert z_{S-1}^* - z_i \Vert \\
    &\le  2 \Vert F^{(S-1)}(z_S) \Vert + \lambda \sum_{i=1}^{S-1} 2^i \Vert z_{i-1}^* - z_i \Vert + \lambda \sum_{i=1}^{S-1} 2^i \Vert z_{S-1}^* - z_{i-1}^* \Vert \\
    &\le 2 \Vert F^{(S-1)}(z_S) \Vert +  \lambda \sum_{i=1}^{S-1} 2^i \Vert z_{i-1}^* - z_i \Vert + \lambda \sum_{i=1}^{S-1} 2^i \sum_{j=i}^{S-1} \Vert z_{j}^* - z_{j-1}^* \Vert \\
    &= 2 \Vert F^{(S-1)}(z_S) \Vert +  \lambda \sum_{i=1}^{S-1} 2^i \Vert z_{i-1}^* - z_i \Vert + \lambda \sum_{j=1}^{S-1} \Vert z_{j}^* - z_{j-1}^* \Vert \sum_{i=1}^{j} 2^i  \\
    &\le  2 \Vert F^{(S-1)}(z_S) \Vert +  \lambda \sum_{i=1}^{S-1} 2^i \Vert z_{i-1}^* - z_i \Vert + \lambda \sum_{j=1}^{S-1} 2^{j+1} \Vert z_j^* - z_{j-1}^* \Vert \\
    &= 2 \Vert F^{(S-1)}(z_S) \Vert +  \lambda \sum_{i=1}^{S-1} 2^i \Vert z_{i-1}^* - z_i \Vert + \lambda \sum_{i=1}^{S-1} 2^{i+1} \Vert z_i^* - z_{i-1}^* \Vert \\
    &\le 4L \Vert z_S - z_{S-1}^* \Vert +  3\lambda \sum_{i=1}^{S-1} 2^i \Vert z_{i-1}^* - z_i \Vert \\
    &\le 16 \lambda \sum_{i=1}^{S} 2^{i-1} \Vert z_{i-1}^* - z_i \Vert.
\end{align*}
Above, the third line follows from $F^{(S-1)}$ is at least $\big(\lambda \sum_{i=1}^{S-1} 2^i\big)$-strongly monotone; the second last line relies on the non-expansiveness after anchoring shown in Lemma~\ref{lem:non-ex} such that $\Vert z_i^* - z_{i-1}^* \Vert \le \Vert z_{i-1}^* - z_i \Vert$; in the last line we use $F^{(S-1)}$ is at most $2L$-Lipschitz and $L \le \lambda 2^{S+1}$; the other steps only requires triangle inequality and simple calculation.
\end{proof}

\subsection{The Proof of Lemma \ref{lem:anchoring}}
\begin{proof}
Let $z_g^*=(x_g^*,y_g^*)$ be the unique stationary point of $g(x,y)$ such that $G(z_g^*) = 0$. Then we have
\begin{align*}
    \Vert F(\tilde z) \Vert &\le \Vert G(\tilde z) \Vert + \lambda \Vert \tilde z - z_0 \Vert \\
    &\le   \Vert G(\tilde z) \Vert + \lambda \Vert \tilde z -z^*_g \Vert + \lambda \Vert z^*_g - z_0 \Vert  \\
    &\le 2 \Vert G(\tilde z) \Vert + \lambda \Vert z^* - z_0 \Vert,
\end{align*}
where the last step we use $G$ is $\lambda$-strongly-monotone and $\Vert z_g^* - z_0 \Vert \le \Vert z^* - z_0 \Vert$ from Lemma~\ref{lem:non-ex}.
\end{proof}

\section{The Proofs in Section \ref{sec:convex-concave}}

We provide detailed proofs for results in Section \ref{sec:convex-concave} except Lemma~\ref{lem:SEG}.
Note that Lemma~\ref{lem:SEG} is a special case of Theorem~\ref{thm:SEG-bias} with $b_{t}= b_{t+1/2} = 0$ and $\sigma_t = \sigma_{t+1/2} = \sigma$. 
Hence, it can be proved by directly using the analysis for Theorem~\ref{thm:SEG-bias} in Appendix~\ref{sec:proof-SEG-bias}.

\subsection{The Proof of Lemma \ref{lem:Epoch-SEG}}

\begin{proof}
Denote $\eta_k$ and $T_k$ be the step size and epoch length in the $k_{\rm th}$ epoch. From Lemma~\ref{lem:SEG} and strong monotone we know that
\begin{align} \label{eq:Epoch-SEG}
\mathbb{E}\Vert z_{k+1} - z^* \Vert^2 &\le \frac{1}{ \lambda \eta_k T_k} \mathbb{E}\Vert z_k - z^* \Vert^2 + \frac{16 \eta_k \sigma^2}{\lambda}. 
\end{align}
Telescoping from $k = 0,1,\cdots,N-1$, we obtain
\begin{align*}
\mathbb{E}\Vert z_N - z^* \Vert^2 \le \frac{1}{2^N} \mathbb{E}\Vert z_0 - z^* \Vert^2 + \frac{8 \sigma^2}{\lambda L}.
\end{align*}
Then we can use induction from $N,N+1,\cdots,N+K-1$ to show inequality in this theorem: suppose it is true for the case $N+k$, then for the case $N+k+1$, we have
\begin{align*}
& \mathbb{E}\Vert z_{N+k+1} - z^* \Vert^2 \\
\le &\frac{1}{4} \mathbb{E} \Vert z_{N+k} - z^* \Vert^2 + \frac{2 \sigma^2}{2^k \lambda L} \\
\le & \frac{1}{4} \left( \frac{1}{2^{N+2k} } \mathbb{E}\Vert z_0 - z^* \Vert^2 + \frac{8 \sigma^2}{2^k \lambda L}\right) + \frac{2 \sigma^2}{2^k \lambda L} \\
\le & \frac{1}{2^{N+2(k+1)}}\mathbb{E}\Vert z_0 - z^* \Vert^2 + \frac{8 \sigma^2}{2^{k+1} \lambda L},
\end{align*}
where the first inequality is by plugging $\eta_k,T_k$ into (\ref{eq:Epoch-SEG}) and the second one comes from the induction hypothesis.

In addition, as a side effect, it is clear that in this procedure, for any $k = 0,1,\cdots,N+K-1$, we can always maintain the bound of
\begin{align} \label{side:init-D}
    \BE\Vert z_k - z^* \Vert^2 \le \BE\Vert z_0 - z^* \Vert^2 + \frac{8 \sigma^2}{\lambda L}.
\end{align}
For each iteration of SEG, we need to quest the stochastic operator twice. Then summing up all the iterations in every epoch yields the complexity as claimed.
\end{proof}

\subsection{The Proof of Theorem \ref{thm:RAIN}}

\begin{proof}
By Lemma~\ref{lem:RAL} and taking expectation, we know that to find $z_S$ such that $\mathbb{E}\Vert F(z_S) \Vert \le \epsilon$ it is sufficient to guarantee
\begin{align} \label{eq:ms-RR}
    256  \lambda_s^2 S^2 \mathbb{E}\Vert z_{s+1} - z_s^* \Vert^2 \le \epsilon^2,
\end{align}
for $s = 0,1,\cdots,S-1$, where $z_s^*$ denotes the solution to 
\begin{align*}
\min_{x \in \BR^{d_x}} \max_{y \in \BR^{d_y}} f_s(x,y).    
\end{align*}

Then we prove (\ref{eq:ms-RR}) holds for $s = 0,1,\cdots,S-1$ by induction.
By the definition of $\lambda_s$, we know that $\lambda_{s+1} = 2 \lambda_s$ and each sub-problem $f_s(x,y)$ is at least $\lambda_s$-strongly-convex-$\lambda_s$-strongly-concave.
Now we specify the parameters in Epoch-SEG by
\begin{align*}
    z_{s+1} \leftarrow \text{Epoch-SEG}(f_s, z_s, \lambda_s, 2L, N_s, K_s).
\end{align*}
By results of (\ref{eq:Epoch-SEG}), we know that
\begin{align*}
&    \mathbb{E}\Vert z_{s+1} - z_s^* \Vert^2 \\
\le & \frac{1}{2^{N_s + 2K_s}} \mathbb{E}\Vert z_s - z_s^* \Vert^2+ \frac{4 \sigma^2}{2^{K_s} \lambda_s L} \\
\le & \frac{1}{2^{N_s + 2K_s}} \mathbb{E}\Vert z_s - z_{s-1}^* \Vert^2+ \frac{4 \sigma^2}{2^{K_s} \lambda_s L},
\end{align*}
where we use Lemma \ref{lem:anchoring} to obtain $\Vert z_s - z_s^* \Vert \le \Vert z_s - z_{s-1}^* \Vert$.

For the case of $s=0$, we have 
\begin{align*}
    \mathbb{E}\Vert z_1 - z_0^* \Vert^2 \le \frac{1}{2^{N_0}} \mathbb{E}\Vert z_0 - z^* \Vert^2 + \frac{4 \sigma^2}{2^{K_0} \lambda L},
\end{align*}
where we use $w_0^* = z^*$, $\lambda_0 = \lambda$ and $K_0 \ge 1$. Then we can let
\begin{align*}
N_0 \ge \log_2 \left(\frac{512 \lambda^2 S^2 D^2}{\epsilon^2} \right ) 
\quad\text{and}\quad    
2^{K_0} \ge \frac{2048 \lambda S^2 \sigma^2}{L \epsilon^2}
\end{align*}
to guarantee $\mathbb{E}\Vert z_1 - z_0^* \Vert^2$ meet our requirement.

Suppose we already have $256\lambda_{s-1}^2 S^2 \mathbb{E}\Vert z_s - z_{s-1}^* \Vert^2 \le \epsilon^2$. By observing that
\begin{align*}
\mathbb{E}[\Vert z_{s+1} - z_s^* \Vert^2] &\le \frac{1}{2^{N_s + 2K_s}} \mathbb{E}[\Vert z_s - z_s^* \Vert^2]+ \frac{4 \sigma^2}{2^{K_s} \lambda_s L} \\
&\le \frac{1}{2^{N_s + 2K_s}} \mathbb{E}[\Vert z_s - z_{s-1}^* \Vert^2]+ \frac{4 \sigma^2}{2^{K_s} \lambda_s L} \\
&\le \frac{1}{2^{K_s}} \times \frac{\epsilon^2}{256 \lambda_{s-1}^2 S^2} + \frac{4 \sigma^2}{2^{N_s} \lambda_s L},
\end{align*}
then let
\begin{align*}
    N_s \ge 3 \qquad \text{and} \qquad 2^{K_s} \ge \frac{2048 \lambda_s S^2 \sigma^2}{L \epsilon^2}
\end{align*}
for all $s\ge 1$ leads to 
\begin{align*}
256\lambda_s^2 S^2 \mathbb{E}\Vert z_{s+1} - z_s^* \Vert^2 \le \epsilon^2.
\end{align*}
The total SFO complexity is
\begin{align*}
& \sum_{s=0}^{S-1} \frac{2L}{\lambda_s} \times \left( 16 N_s + 64 \times  2^{K_s} \right) \\
=& \frac{2L N_0}{\lambda} + \sum_{s=1}^{S-1} \frac{96 L}{\lambda_s} + \sum_{s=0}^{S-1} \frac{128 L}{\lambda_s} \times 2^{K_s} \\
\le& \frac{2L N_0}{\lambda} + \sum_{s=1}^{S-1} \frac{96 L}{\lambda_s} + \sum_{s=0}^{S-1} \frac{512 L}{\lambda_s} \times \frac{2048 \lambda_s S^2 \sigma^2}{L \epsilon^2} \\
\le& \frac{2L N_0}{\lambda} + \frac{96L}{\lambda} + \frac{1048576 S^3 \sigma^2}{\epsilon^2}.
\end{align*}
Plugging the value of $N_0$ and $S$ into above equation completes the proof. 
\end{proof}

\subsection{The Proof of Theorem \ref{thm:EA-C}}

\begin{proof}
Let $w$ be the output of applying RAIN on the function $g(x,y)$ and we define the operator $G_0(z)\triangleq F(z)+\lambda(z-z_0)$.
By the anchoring lemma with any $0 < \lambda \le \epsilon/D$ and taking expectation, we know that if we can make sure $\mathbb{E}\Vert G_0(w) \Vert \le \epsilon$, then we can obtain  $\mathbb{E}\Vert F(w) \Vert \le 3 \epsilon$ . By Theorem \ref{thm:RAIN}, we can ensure $\mathbb{E}\Vert G(w) \Vert \le \epsilon$ within a SFO complexity of
\begin{align*}
\mathcal{O} \left( \frac{L+ \lambda}{\lambda} + \frac{L+ \lambda}{\lambda} \log \left( \frac{\lambda D}{\epsilon} \log \left ( \frac{L+\lambda}{\lambda}\right) \right) + \frac{\sigma^2}{\epsilon^2} \log^3 \left( \frac{L + \lambda}{\lambda}\right)\right).
\end{align*}
Then we complete the proof by plugging the value of $\lambda$ into above equation.
\end{proof}

\section{The Proofs in Section \ref{sec:extension}}

We provide the detailed proofs for results in Section~\ref{sec:extension}.

\subsection{The Proof of Proposition~\ref{prop:id-scsc}}
\begin{proof}
Since $\tau \ge 2L$, then we know
\begin{align*}
\nabla_{xx}^2 f(x,y) +  \nabla_{xy}^2 f(x,y)(2L I - \nabla_{yy}^2 f(x,y))^{-1} \nabla_{yx}^2 f(x,y) \succeq \alpha I
\end{align*}
and
\begin{align*}
-\nabla_{yy}^2 f(x,y) +  \nabla_{yx}^2 f(x,y)(2L I + \nabla_{xx}^2 f(x,y))^{-1} \nabla_{xy}^2 f(x,y) \succeq \alpha I,
\end{align*}
which implies the desired result by Proposition 2.6 of \citet{grimmer2020landscape}. \end{proof}

\subsection{The Proof of Proposition~\ref{prop:cc-envelope}}
\begin{proof}
According to the proof of Example 1 from \citet{lee2021fast}, we know that
\begin{align*}
    (F(z) - F(z'))^\top (z - z') \ge -\frac{1}{2L} \Vert F(z) - F(z') \Vert^2
\end{align*}
for any $z=(x,y)$ and $z'=(x',y')$, which implies the gradient operator of $f_{2L}(x,y)$ is monotone. 
It further indicates the saddle envelope $f_{2L}(x,y)$ is convex-concave \cite[Lemma~3]{liu2021first}.
\end{proof}

\subsection{The Proof of Theorem \ref{thm:Epoch-SEG+}}

\begin{proof}
We know that
\begin{align*}
\mathbb{P}(J = j) = 2^{-j}, \quad j = 1,\cdots,K.
\end{align*}
Since $\hat z_m$ is i.i.d for all $m$, we have
\begin{align*}
\mathbb{E}[\hat z] = \mathbb{E}[\hat z_0]=\mathbb{E}[z_N] + \sum_{j=1}^{K} \mathbb{P}(J = j) 2^j \mathbb{E}[z_{N+j} - z_{N+j-1}] = \mathbb{E}[z_{N+K}].
\end{align*}
Using  Lemma \ref{lem:Epoch-SEG} and Jensen's inequality, we obtain the upper bound for 
bias:
\begin{align*}
\Vert \mathbb{E}\hat z - z^*\Vert^2 \le \mathbb{E}\Vert z_{N+K} - z^* \Vert^2 \le \frac{1}{2^{N+2K} } \times \mathbb{E}\Vert z_0 - z^* \Vert^2 + \frac{8 \sigma^2}{2^K \lambda L};
\end{align*}
as well as the upper bound for variance: 
\begin{align*}
\begin{split}    
& \mathbb{E}\Vert \hat z - \mathbb{E} \hat z \Vert^2 \\
\le & \frac{1}{M} \mathbb{E}\Vert \hat z_0 - \mathbb{E} \hat z_0 \Vert^2 \\
\le & \frac{1}{M} \mathbb{E}\Vert \hat z_0 - z^* \Vert^2  \\
\le &\frac{2}{M} \mathbb{E}\Vert z_N - z^* \Vert^2 + \frac{2}{M} \mathbb{E}\Vert 2^J (z_{N+J} - z_{N+J-1}) \Vert^2 \\
=& \frac{2}{M} \mathbb{E}\Vert z_N - z^* \Vert^2 + \frac{2}{M} \sum_{j=1}^{K} \mathbb{P}(J=j) 2^j \Vert z_{N+j}  - z_{N+j-1} \Vert^2 \\
\le & \frac{2}{M} \mathbb{E}\Vert z_N - z^* \Vert^2 + \frac{4}{M} \sum_{j=1}^{K} 2^j \big(\Vert z_{N+j} - z^* \Vert^2 + \Vert z_{N+j-1} - z^* \Vert^2\big) \\
\le & \frac{2}{M} \mathbb{E}\Vert z_N - z^* \Vert^2 +  \frac{1}{M}\sum_{j=1}^{K}  \left\{\frac{1}{2^{N+j}} \times 20 \mathbb{E}\Vert z_0- z^* \Vert^2 + \frac{96 \sigma^2}{\lambda L} \right\} \\
\le & \frac{1}{M} \left\{\frac{2}{2^N}\mathbb{E}\Vert z_0 - z^* \Vert^2 + \frac{16 \sigma^2}{\lambda L}+ \frac{20}{2^N}\mathbb{E}\Vert z_0 - z^* \Vert^2 + \frac{96 K \sigma^2}{\lambda L} \right\} \\
\le & \frac{22}{2^N M} \mathbb{E}[\Vert z_0 - z^* \Vert^2] + \frac{112 K \sigma^2}{M \lambda L },
\end{split}
\end{align*}
where the second inequality follows from the fact that variance is always bounded by mean square error, i.e. it holds that
\begin{align*}
    & \mathbb{E}\Vert \hat z_0 - \mathbb{E} \hat z_0 \Vert^2 \\
    =& \mathbb{E}\Vert \hat z_0 \Vert^2 - \Vert \mathbb{E} \hat z_0 \Vert^2 \\
    \le& \mathbb{E}\Vert \hat z_0 \Vert^2 - 2  \langle \mathbb{E} \hat z_0, z^*\rangle + \Vert z^* \Vert^2 \\
    =& \mathbb{E}\Vert \hat z_0 - z^* \Vert^2;
\end{align*}
where the third inequality is from the definition of $\hat z_m$; the second last, third last and fourth last ones are all by Lemma \ref{lem:Epoch-SEG}; others are dependent on the Young's inequality or simple algebra.
The complexity in expectation can be derived by Lemma \ref{lem:Epoch-SEG}, which is
\begin{align*}
\quad & M \times \mathbb{E}\big[ 16\kappa N +  64 \kappa  2^J \big] \\
= & 16 \kappa MN + 64 \kappa M\sum_{j=1}^{K} \mathbb{P}(J=j) 2^j \\
= & 16 \kappa MN + 64 \kappa MK,
\end{align*}
where $\kappa=L/\lambda$.
\end{proof}

\subsection{The Proof of Theorem \ref{thm:SEG-bias}}\label{sec:proof-SEG-bias}

\begin{proof}
We begin from the strong monotonicity and an identity:
\begin{align} \label{eq:SEG-0}
\begin{split}
&  2\mathbb{E}[ F(z_{t+1/2})^\top (z_{t+1/2}- z^*) ] \\
=&  2\mathbb{E}[(F(z_{t+1/2}) - F(z_{t+1/2};\xi_j))^\top (z_{t+1/2}  -z^*) +F(z_{t+1/2};\xi_j) ^\top (z_{t+1} - z^*)] \\
& + 2\mathbb{E}[  ( F(z_{t+1/2};\xi_j) - F(z_t;\xi_i))^\top(z_{t+1/2} - z_{t+1}) +  F(z_{t};\xi_i) ^\top (z_{t+1/2} - z_{t+1})].
\end{split}
\end{align}
Note that we have
\begin{align} \label{eq:SEG-1}
\begin{split}
& 2\mathbb{E}[(F(z_{t+1/2}) - F(z_{t+1/2};\xi_j))^\top (z_{t+1/2}  -z^*)] \\
=& 2\mathbb{E}[F(z_{t+1/2}) - F(z_{t+1/2};\xi_j)]^\top \mathbb{E}[z_{t+1/2} - z^*] \\
\le & \frac{\lambda}{2} \mathbb{E}[ \Vert z_{t+1/2} - z^* \Vert^2] + \frac{2b_{t+1/2}^2}{\lambda},
\end{split}
\end{align}
where we use the independence and Young's inequality. Also, we have
\begin{align} \label{eq:SEG-2}
\begin{split}
& 2\mathbb{E}[F(z_{t+1/2};\xi_j) ^\top (z_{t+1} - z^*)] \\
=& \frac{2}{\eta} \mathbb{E}[ (z_t- z_{t+1})^\top (z_{t+1} - z^*)] \\
=& \frac{1}{\eta} \mathbb{E} [ \Vert z_t - z^* \Vert^2 - \Vert z_t - z_{t+1} \Vert^2 - \Vert z_{t+1} -z^* \Vert^2]. 
\end{split}
\end{align}
and
\begin{align} \label{eq:SEG-3}
\begin{split}
&\quad 2\mathbb{E}[F(z_{t};\xi_i) ^\top (z_{t+1/2} - z_{t+1})] \\
&= \frac{2}{\eta} \mathbb{E}[ (z_t- z_{t+1/2})^\top (z_{t+1/2} - z_{t+1})] \\
&= \frac{1}{\eta} \mathbb{E} [ \Vert z_t - z_{t+1} \Vert^2 - \Vert z_t - z_{t+1/2} \Vert^2 - \Vert z_{t+1/2} -z_{t+1} \Vert^2] \\
\end{split}
\end{align}
Using $\eta \le 1/(4 L)$ and  Young's inequality we obtain
\begin{align} \label{eq:SEG-4}
\begin{split}
&\quad 2\mathbb{E} [(F(z_{t+1/2};\xi_j) - F(z_t;\xi_i))^\top(z_{t+1/2} - z_{t+1})] \\
&\le \mathbb{E}\left[ 2\eta \Vert F(z_{t+1/2};\xi_j) - F(z_t;\xi_i) \Vert^2 + \frac{1}{2 \eta} \Vert z_{t+1/2} - z_{t+1} \Vert^2\right] \\
&\le \mathbb{E}\left[   6 \eta \Vert F(z_{t+1/2}) -  F(z_{t+1/2};\xi_j) \Vert^2  + 6 \eta \Vert F(z_{t})- F(z_{t};\xi_i) \Vert^2 \right]\\ 
&\quad +\mathbb{E} \left[ 6 \eta \Vert F(z_{t+1/2}) - F(z_t) \Vert^2  +\frac{1}{2 \eta} \Vert z_{t+1/2} - z_{t+1} \Vert^2\right] \\
&\le 6 \eta (e_t^2 + e_{t+1/2}^2) + 6 \eta L^2 \mathbb{E}[  \Vert z_t - z_{t+1/2} \Vert^2] + \frac{1}{2 \eta} \mathbb{E}[\Vert z_{t+1/2 } - z_{t+1 } \Vert^2] \\
&\le  6 \eta (e_t^2 + e_{t+1/2}^2) + \frac{1}{2 \eta} \mathbb{E}[  \Vert z_t - z_{t+1/2} \Vert^2+\Vert z_{t+1/2 } - z_{t+1} \Vert^2], 
\end{split}
\end{align}
where the second last inequality follows from $L$-Lipschitz property.

The final step is to plug (\ref{eq:SEG-1}) (\ref{eq:SEG-2}) (\ref{eq:SEG-3}) (\ref{eq:SEG-4}) into (\ref{eq:SEG-0}) and then apply the strongly monotonicity which implying the fact that
\begin{align*}
    F(z_{t+1/2})^\top (z_{t+1/2} - z^*) \ge  \lambda \Vert z_{t+1/2} - z^* \Vert^2.
\end{align*}
\end{proof}

\subsection{The Proof of Theorem \ref{thm:RAIN++-ID} and Theorem \ref{thm:RAIN++-NC} }

First of all, we present some useful lemmas.
\begin{lem}[non-expansiveness of the resolvent]\label{lem:NER}
Denote
\begin{itemize}
    \item $z^+=(x^+,y^+)$ as the solution to $\min_{x' \in \BR^{d_x}} \max_{y' \in \BR^{d_y}} f(x',y') + L \Vert x' - x \Vert^2 - L \Vert y' - y \Vert^2$;
    \item $w^+=(u^+,v^+)$ as  the solution to $\min_{x' \in \BR^{d_x}} \max_{y' \in \BR^{d_y}} f(x',y') + L \Vert x' - u \Vert^2 - L \Vert y' - v \Vert^2$.
\end{itemize}
Then it holds that $\Vert z^+ - w^+ \Vert \le 2\Vert z - w \Vert$, where $z = (x,y)$ and $w = (u,v)$.
\end{lem}
\begin{proof}
It follows from 
\begin{align*}
    \Vert z^+ - w^+ \Vert \le \Vert z - w \Vert + \frac{1}{2L} \Vert F(z^+) - F(w^+) \Vert \le \Vert z - w \Vert + \frac{1}{2} \Vert z^+ - w^+ \Vert,
\end{align*}
where the relationship $F(z^+) = 2L(z - z^+)$ and $F(w^+) = 2L(w - w^+)$ and the triangle inequality is used. 
\end{proof}

\begin{lem} \label{lem:6L}
If $F$ is a $L$-Lipschitz continuous operator, then $F_{2L}$ is a $6L$-Lipschitz continuous operator.
\end{lem}

\begin{proof}
Note that we have the following relationship of
\begin{align*}
F_{2L} (z) = F(z^+) = 2L (z  - z^+).
\end{align*}
Hence we know that the $F_{2L}(\,\cdot\,)$ is $6L$-Lipschitz continuous by noting
\begin{align*}
\Vert F(z^+) - F(w^+) \Vert \le 2 L \Vert z - w \Vert + 2L \Vert z^+ - w^+ \Vert \le 6 L \Vert z - w \Vert
\end{align*}
where we use Lemma~\ref{lem:NER} to obtain that $\Vert z^+ - w^+ \Vert \le 2\Vert z - w \Vert$.
\end{proof}



As we have mentioned, for given $k,s$, the key is to control the bias as well as the variance term of the estimated gradient operator on envelope in SEG$^{++}$ (Algorithm \ref{alg:SEG++}) within $\tilde \fO(1)$ SFO complexity. We present the details in the following theorem. 

\begin{thm} \label{thm:SEG++}
Under the setting of both Theorem \ref{thm:RAIN++-ID} and Theorem \ref{thm:RAIN++-NC}, if the input of \SEGpp (Algorithm \ref{alg:SEG++}) holds that
\begin{align} \label{eq:uni-bound}
    \lambda \eta \ge \delta_{\rm min}, \quad T \le  T_{\rm max}, \quad \Vert z_s^* - z_0 \Vert \le D_{\rm max}, \quad \Vert w_0 - z_0^+ \Vert^2 \le \frac{\sigma^2}{4L^2} + D^2
\end{align}
for some $D_{\rm max} \ge D$, {$\delta_{\min} >0$, and, $T_{\rm max} >0$} and we set
\begin{align} \label{value:NMK}
    K = \left \lceil \log_2 \left(\frac{12}{\delta_{\rm min}} \right)\right \rceil,\quad
    N =  \left \lceil \log_2 \left(\max \left \{ \frac{2 T_{\rm max}}{\delta_{\rm min}}, \frac{8 D_{\rm max}^2 L^2}{\delta_{\rm min} \sigma^2} \right\} \right)\right \rceil,\quad M = 1792 K,
\end{align}
then it guarantees
\begin{align} \label{eq:SEG++}
\begin{split}
 \lambda \mathbb{E}[\Vert z_{t+1/2} - z_s^* \Vert^2] 
\le  \frac{1}{\eta} \mathbb{E}[\Vert z_{t} - z_s^* \Vert^2 - \Vert z_{t+1} - z_s^* \Vert^2]  - \frac{1}{2 \eta} \mathbb{E}[\Vert z_{t+1} - z_{t+1/2} \Vert^2] + 16 \eta \sigma^2
\end{split}
\end{align}
for any $s = 0,1,\cdots, S-1$, where $z_s^*=(x_s^*,y_s^*)$ is the solution to 
\begin{align*}
\min_{x \in \BR^{d_x} } \max_{y\in \BR^{d_y}} f_{2L}^{(s)}(x,y)    
\end{align*}
and $f_{2L}^{(s)}(x,y)$ is defined in (\ref{eq:env-s}) for ID case or (\ref{eq:env-s-nc}) for NC case, respectively. 
Furthermore, the output holds that
\begin{align} \label{sigma:output}
\BE\Vert \hat z_{J+1/2} - z_{J+1/2}^+   \Vert^2 \le \frac{\sigma^2}{4L^2},
\end{align}
where $z^+=(x^+, y^+)$ is the solution to $ \min_{x' \in \BR^{d_x}} \max_{y' \in \BR^{d_y}} f(x,y) + L \Vert x' - x \Vert^2 - L \Vert y' - y \Vert^2$;
and each call of \ESEGp in \SEGpp  (line 6 and line 10 of Algorithm \ref{alg:SEG++}) can be finished within the SFO complexity of
\begin{align} \label{log:SFO}
\fO \left( \max \left\{ \log \left(\frac{T_{\rm max}}{\delta_{\rm min}}\right),\, \log \left( \frac{D_{\rm max}^2 L^2}{\delta_{\rm min}\sigma^2} \right) \right\}  \log \left( \frac{1}{\delta_{\rm min}}\right) \right).
\end{align}

\end{thm}

\begin{proof}
We introduce the following notations for our proof:
\begin{itemize}
\item We denote $F_{2L}^{(s)}$ as the gradient operator of $f_{2L}^{(s)}(x,y)$.
\item We denote $z^+=(x^+,y^+)$ as the solution to the minimax problem
\begin{align*}
\min_{x' \in \BR^{d_x}} \max_{y' \in \BR^{d_y}} f(x',y') + L \Vert x' - x \Vert^2 - L \Vert y' - y \Vert^2.
\end{align*}

\item In the procedure of Algorithm \ref{alg:RAIN++:expand}, we denote 
\begin{align*}
\hat F_{2L}^{(s)}(z) = 
\begin{cases}
2L(z - \hat z^+) + \sum_{i=1}^s \lambda_i (z - z_i), & \text{ ID case}; \\
2L(z - \hat z^+) + \sum_{i=0}^s \lambda_i (z - z_i), & \text{ NC case};
\end{cases}
\end{align*}
for given $z=(x,y)$ and $s$. Note that $\hat F_{2L}^{(s)}$ is used to approximate $F_{2L}^{(s)}$ in our algorithm and analysis.
\item We denote 
{\small \begin{align*}
& b_t^{(s)} \triangleq \Vert \mathbb{E} \hat F_{2L}^{(s)}(z_t) - F_{2L}^{(s)}(z_t) \Vert,~~~~~~~~~~~~~~~~~~
\big(\sigma_{t}^{(s)}\big)^2 \triangleq \mathbb{E} [\Vert \hat F_{2L}^{(s)}(z_t) - \mathbb{E} \hat F_{2L}^{(s)}(z_{t}) \Vert^2], \\
& b_{t+1/2}^{(s)} \triangleq \Vert \mathbb{E} \hat F_{2L}^{(s)}(z_{t+1/2}) - F_{2L}^{(s)}(z_{t+1/2}) \Vert,~~~~
\big(\sigma_{t+1/2}^{(s)}\big)^2 \triangleq \mathbb{E} [\Vert \hat F_{2L}^{(s)}(z_{t+1/2}) - \mathbb{E} \hat F_{2L}^{(s)}(z_{t+1/2}) \Vert^2 ].
\end{align*}}
for the bias and variance of the approximate gradient operator $F_{2L}^{(s)}$. \end{itemize}
If the the bias and variance of $F_{2L}^{(s)}(\cdot)$ satisfy
\begin{align}\label{cond:bias-variance}
b_t^{(s)}\leq 2\lambda\eta\sigma^2,\quad b_{t+1/2}^{(s)}\leq 2\lambda\eta\sigma^2,\quad
\big(\sigma_t^{(s)}\big)^2\leq\frac{\sigma^2}{2} 
\quad \text{and} \quad
\big(\sigma_{t+1}^{(s)}\big)^2\leq \frac{\sigma^2}{2},
\end{align}
then applying Theorem~\ref{thm:SEG-bias} on $f_{2L}^{(s)}$ leads to the result of (\ref{eq:SEG++}). 
The Lipschitz continuity of $\hat F_{2L}$ indicates the conditions (\ref{cond:bias-variance}) holds if we can prove the following claims:
\begin{align} \label{eq:induction}
\begin{split}
&\texttt{Claim I}:~~  4L^2 \Vert \mathbb{E} \hat z_t^+ - z_t^+ \Vert^2 \le 2 \lambda \eta \sigma^2,~~~~~~~~~~~~~4L^2 \mathbb{E}[\Vert \hat z_t^+ - \mathbb{E} \hat z_t^+ \Vert^2] \le \sigma^2/2; \\
&\texttt{Claim II}:~ 4L^2 \Vert \mathbb{E} \hat z_{t+1/2}^+ - z_{t+1/2}^+ \Vert^2 \le 2 \lambda \eta\sigma^2,~~~~4L^2 \mathbb{E}[\Vert \hat z_{t+1/2}^+ - \mathbb{E}\hat z_{t+1/2}^+ \Vert^2] \le \sigma^2/2;
\end{split}
\end{align}
Now start to prove (\ref{eq:induction}) holds for all $t$ by induction, which implies the result of (\ref{eq:SEG++}) in the theorem.
\paragraph{Induction Base:} 
For Claim I with $t=0$, applying Theorem \ref{thm:Epoch-SEG+} and using the fact $\hat z_{-1/2}^+ = w_0$, we have
\begin{align*}
    \Vert \BE \hat z_0^+ - z_0^+ \Vert^2 &\le \frac{1}{2^{N+2K}} \times  \Vert w_0 - z_0^+ \Vert^2 + \frac{8 \sigma^2}{2^K \times 3L^2} 
\end{align*}
and
\begin{align*}
    \BE \Vert \hat z_0^+ - \BE \hat z_0^+ \Vert^2 &\le \frac{1}{2^N M} \times 22 \Vert w_0 - z_0^+ \Vert^2 + \frac{112K \sigma^2}{M \times 3L^2}.
\end{align*}
Plugging the setting of $N,M$ and $K$ as (\ref{value:NMK}) into above inequalities and using the bound $\Vert w_0 - z_0^+ \Vert \le \sigma^2 / (4L^2) + D^2$ followed from the assumption, we obtain Claim I for $t = 0$.
The next step should be showing Claim II holds for $t=0$. We can prove this result by the same way as proving \texttt{Claim I} $\Rightarrow$ \texttt{Claim II}, which will be detailed presented in upcoming paragraph.
\paragraph{Induction Step:}  Suppose the results of (\ref{eq:induction}) hold for all $t' \le t$, then we target to show \texttt{Claim I} and \texttt{Claim II} both hold for $t+1$. 

We first consider \texttt{Claim I}. Since we already have (\ref{eq:induction}) for all $t' \le t$, then it implies that (\ref{eq:SEG++}) also holds for all $t' \le t$ by using Theorem \ref{thm:SEG-bias}. Telescoping the result of (\ref{eq:SEG++}) for $t' = 0,\cdots, t$, we obtain
\begin{align}\label{bound:zt12-zt1}
\mathbb{E}\Vert z_{t+1/2} - z_{t+1} \Vert^2 \le 2\mathbb{E}\Vert z_0 - z_s^* \Vert^2 + 32 \eta^2 \sigma^2 T_{\rm max}
\end{align}
and
\begin{align} \label{bound:zt+1s}
\mathbb{E}\Vert z_{t+1} - z_s^* \Vert^2 \le \mathbb{E}\Vert z_0- z_s^* \Vert^2 + 16 \eta^2 \sigma^2 T_{\rm max}.
\end{align}
Using the bias-variance decomposition, i.e.
\begin{align*}
    \mathbb{E}\Vert \hat z_{t+1/2}^+ - z_{t+1/2}^+ \Vert^2 = \Vert z_{t+1/2}^+ - \BE\hat z_{t+1/2}^+ \Vert^2 + \BE\Vert \hat z_{t+1/2}^+ - \BE \hat z_{t+1/2}^+ \Vert^2
\end{align*}
and the induction hypothesis, we know that
\begin{align*}
    4L^2 \mathbb{E}\Vert \hat z_{t+1/2}^+ - z_{t+1/2}^+ \Vert^2 \le \sigma^2.
\end{align*} 
Then we have
\begin{align} \label{eq:bound-cmp}
\begin{split} 
&\quad \mathbb{E}\Vert \hat z_{t+1/2}^+ -z_{t+1}^+ \Vert^2  \\
&\le 2 \mathbb{E}\Vert \hat z_{t+1/2}^+ - z_{t+1/2}^+ \Vert^2 + 2 \mathbb{E}\Vert z_{t+1/2}^+ - z_{t+1}^+ \Vert^2 \\
&\le 2 \mathbb{E}\Vert \hat z_{t+1/2}^+ - z_{t+1/2}^+ \Vert^2 + 8 \mathbb{E}\Vert z_{t+1/2} - z_{t+1} \Vert^2 \\
&\le \frac{\sigma^2}{2L^2} + 16\mathbb{E}\Vert z_0 - z_s^* \Vert^2 + 256 \eta^2 \sigma^2 T_{\rm max} \\
&\le \frac{ \sigma^2 T_{\rm max}}{L^2} + 16\mathbb{E}\Vert z_0 - z_s^* \Vert^2,
\end{split}
\end{align}
where we use the bound for $\mathbb{E}\Vert \hat z_{t+1/2}^+ - z_{t+1/2}^+ \Vert$ by induction hypothesis and the bound of $\mathbb{E}\Vert z_{t+1/2} - z_{t+1} \Vert^2$ shown in (\ref{bound:zt12-zt1}).

Note that the algorithm apply the update rule
\begin{align*}
    \hat z_{t+1}^+ \leftarrow \text{Epoch-SEG}^+ (g_{t+1},\hat z_{t+1/2}^+,L, 3L,N,K,M).
\end{align*}
Applying Theorem \ref{thm:Epoch-SEG+}, we have
\begin{align*}
&\quad \Vert z_{t+1}^+ - \mathbb{E} \hat z_{t+1}^+ \Vert^2 \\
&\le \frac{1}{2^{N+2K} } \times \mathbb{E}\Vert \hat z_{t+1/2}^+ - z_{t+1}^+ \Vert^2 + \frac{8 \sigma^2}{2^K \times 3 L^2} \\
&\le \underbrace{\frac{1}{2^{N+2K} } \times \left( \frac{ \sigma^2 T_{\rm max}}{L^2} +16 \mathbb{E}\Vert z_0 - z_s^* \Vert^2 \right)}_{A_1} +  \underbrace{\frac{8 \sigma^2}{2^K \times3 L^2}}_{A_2}.
\end{align*}
and
\begin{align*}
\begin{split}    
&\quad \mathbb{E}\Vert \hat z_{t+1}^+ - \mathbb{E} \hat z_{t+1}^+ \Vert^2 \\
& \le \frac{1}{2^N M}\times 22 \mathbb{E}\Vert \hat z_{t+1/2}^+ - z_{t+1}^+ \Vert^2 + \frac{112K \sigma^2}{M \times 3 L^2 } \\
&\le \underbrace{\frac{1}{2^N M}\times \left( \frac{22 \sigma^2 T_{\rm max}}{L^2} + 352 \mathbb{E}\Vert z_0 - z_s^* \Vert^2 \right)}_{B_1} + \underbrace{\frac{112 K \sigma^2}{M \times 3 L^2}}_{B_2}. 
\end{split}
\end{align*}
The parameters setting as (\ref{value:NMK}) guarantees $A_1,A_2,B_1$ and $B_2$ are sufficient small, which leads to
\begin{align} \label{eq:para1-SEG++}
\begin{split}
4L^2 \Vert \mathbb{E} \hat z_{t+1}^+ - z_{t+1}^+ \Vert^2 &\le 2 \lambda \eta\sigma^2  
\qquad\text{and}\qquad
4L^2 \mathbb{E}[\Vert \hat z_{t+1}^+ - \mathbb{E} \hat z_{t+1}^+ \Vert^2] \le \frac{\sigma^2}{2}.
\end{split}
\end{align}
Then we finish the proof of \texttt{Claim I} for $t+1$. 

Now we show that \texttt{Claim I} $\Rightarrow$ \texttt{Claim II} to finish our induction. We begin from
\begin{align} \label{bound:zt+32t+1}
\begin{split}
&\quad \mathbb{E}\Vert z_{t+3/2} - z_{t+1} \Vert^2 \\
&= \eta^2 \mathbb{E}\Vert \hat F_{2L}^{(s)} (z_{t+1}) \Vert^2 \\
&\le 2 \eta^2 \mathbb{E}\Vert F_{2L}^{(s)}(z_{t+1}) \Vert^2 + 2 \eta^2 \mathbb{E}\Vert F_{2L}^{(s)}(z_{t+1}) - \hat F_{2L}^{(s)}(z_{t+1}) \Vert^2 \\
&\le 1152 \eta^2 L^2 \mathbb{E}\Vert z_{t+1} - z_s^* \Vert^2 + 2 \eta^2 \sigma^2 \\
&\le 1152 \eta^2 L^2\left( \mathbb{E}\Vert z_0 - z_s^* \Vert^2 +16 \eta^2 \sigma^2 T_{\rm max}\right) +  2 \eta^2 \sigma^2,
\end{split}
\end{align}
where the equality is based on the update $z_{t+3/2} \leftarrow z_{t+1} -  \eta \hat F_{2L}^{(s)}(z_{t+1})$; the inequalities use the fact that $F_{2L}^{(s)}$ is $24L$-Lipschitz continuous and the upper bound of $\mathbb{E}\Vert z_{t+1} - z_s^* \Vert^2$ which we have shown in (\ref{bound:zt+1s}).\footnote{Note that in the case of $t = -1$, bound of $\BE[ \Vert z_{t+1} - z_s^* \Vert^2]$ is independent on the induction hypothesis.}

Furthermore, the results of (\ref{eq:para1-SEG++}) imply
\begin{align*}
    4L^2\mathbb{E}\Vert z_{t+1}^+ - \hat z_{t+1}^+ \Vert^2 \le \sigma^2.
\end{align*}
Combining Lemma~\ref{lem:NER} with the bound of $\mathbb{E}\Vert z_{t+3/2} - z_{t+1} \Vert^2$ as shown in (\ref{bound:zt+32t+1}), we obtain
\begin{align*}
 & \mathbb{E}\Vert z_{t+3/2}^+ - \hat z_{t+1}^+ \Vert^2 \\
\le & 2 \mathbb{E}\Vert z_{t+1}^+ - \hat z_{t+1}^+ \Vert^2 + 2\mathbb{E}\Vert z_{t+3/2}^+ - z_{t+1}^+ \Vert^2 \\
\le & 2 \mathbb{E}\Vert z_{t+1}^+ - \hat z_{t+1}^+ \Vert^2  + 8\mathbb{E}\Vert z_{t+3/2} - z_{t+1} \Vert^2 \\
\le & \frac{\sigma^2 T_{\rm max}}{L^2} + 16\mathbb{E}\Vert z_0 - z_s^*\Vert^2,
\end{align*}
which is derived by the similar way to the proof of (\ref{eq:bound-cmp}). 
Note that the algorithm use the update
\begin{align*}
    \hat z_{t+3/2}^+ \leftarrow \text{Epoch-SEG}^+ (g_{t+1}, \hat z_{t+1}^+, L, 3L, N,K,M).
\end{align*}
Applying Theorem \ref{thm:Epoch-SEG+}, we have
\begin{align*}
& \Vert \mathbb{E}\hat z_{t+3/2}^+ - z_{t+3/2}^+\Vert^2 \\
\le & \frac{1}{2^{N+2K} } \times \mathbb{E}\Vert \hat z_{t+1}^+ - z_{t+3/2}^+ \Vert^2 + \frac{8 \sigma^2}{2^K \times 3 L^2} \\
\le & \underbrace{\frac{1}{2^{N+2K}} \times \left( \frac{ \sigma^2 T_{\rm max}}{L^2}  +16 \mathbb{E}\Vert z_0 - z_s^* \Vert^2 \right) }_{A_1}+  \underbrace{\frac{8 \sigma^2}{2^K \times3 L^2}}_{A_2}.
\end{align*}
and
\begin{align*}
&\quad \mathbb{E}\Vert \hat z_{t+3/2}^+ - \mathbb{E} \hat z_{t+3/2}^+ \Vert^2 \\
& \le \frac{1}{2^N M}\times 22 \mathbb{E}\Vert \hat z_{t+1}^+ - \hat z_{t+3/2}^+ \Vert^2 + \frac{112K \sigma^2}{M \times 3 L^2 } \\
&\le \underbrace{\frac{1}{2^N M}\times \left( \frac{22 \sigma^2 T_{\rm max}}{L^2} + 352 \mathbb{E}\Vert z_0 - z_s^* \Vert^2 \right)}_{B_1} + \underbrace{\frac{112 K \sigma^2}{M \times 3 L^2}}_{B_2}. 
\end{align*}
The parameters setting of (\ref{value:NMK}) guarantees
\begin{align} \label{eq:para2-SEG++}
\begin{split}
4L^2 \Vert \mathbb{E} \hat z_{t+3/2}^+ - z_{t+3/2}^+ \Vert^2 &\le 2 \lambda \eta\sigma^2  \qquad \text{and} \qquad
4L^2 \mathbb{E}\Vert \hat z_{t+3/3}^+ - \mathbb{E}\hat z_{t+3/2}^+ \Vert^2 \le \frac{\sigma^2}{2},
\end{split}
\end{align}
which is means \texttt{Claim II} holds. Hence, we have complete the induction.

We can verify the the condition number of sub-problem  is no more than $3$: since it is $L$-strongly-convex-$L$-strongly-concave and $3L$-smooth. Then based on the setting of (\ref{value:NMK})
and Theorem \ref{thm:Epoch-SEG+}, it guarantees that the SFO complexity of each call of \ESEGp (line 6 and line 10 of Algorithm \ref{alg:SEG++}) is no more than
\begin{align*}
\quad 48 MN + 192 MK 
= \fO\left(\max \left\{ \log \left(\frac{T_{\rm max}}{\delta_{\rm min}}\right) ,\log \left( \frac{D_{\rm max}^2 L^2}{\delta_{\rm min}\sigma^2} \right) \right\}\log\left( \frac{1}{\delta_{\rm min}}\right)\right).
\end{align*}
\end{proof}

\subsubsection{The Proof of Theorem \ref{thm:RAIN++-ID} and Theorem \ref{thm:RAIN++-NC}}

Note that the parameters $N_s$ and $K_s$ in Theorem \ref{thm:RAIN++-ID} follow (\ref{eq:para-EA}) by replacing $L$ with $6L$. 
Our proof of Theorem \ref{thm:RAIN++-ID} and Theorem \ref{thm:RAIN++-NC} will be described by the notation of Algorithm~\ref{alg:RAIN++:expand}.

\begin{proof}
The main steps of \RAINpp~(Algorithm~\ref{alg:RAIN++:expand}) are based on calling \SEGpp~ with $s=0, 1,\dots,S-1$ and $k = 0, 1,\dots, N_s + K_s-1$.
By Theorem~\ref{thm:SEG++}, once the initial conditions of (\ref{eq:uni-bound}) hold for the call of \SEGpp ~in  \RAINpp~(Algorithm~\ref{alg:RAIN++:expand}), we achieve the bound (\ref{eq:SEG++}) on $f_{2L}^{(s)}(x,y)$ (which is defined in (\ref{eq:env-s}) for ID case and (\ref{eq:env-s-nc}) for NC case) for any $s$ (just like running SEG on $f_s$).
Then the SFO complexity of each \SEGpp~ call in \RAINpp is
\begin{align} \label{eq:log}
l = \fO \left(\max \left\{ \log \left(\frac{T_{\rm max}}{\delta_{\rm min}}\right) ,\log \left( \frac{D_{\rm max}^2 L^2}{\delta_{\rm min}\sigma^2} \right) \right\}  \log \left( \frac{1}{\delta_{\rm min}}\right) \right).
\end{align}
Note that \RAINpp~can be regarded as the modification of RAIN by replacing the subroutines SEG with \SEGpp.
Compared with SEG, \SEGpp~requires additional stochastic oracle calls to estimate $F(\cdot)$. 
Therefore, the total SFO complexity of RAIN$^{++}$ is the total SFO complexity of RAIN (shown in Theorem \ref{thm:RAIN}) multiplying $l$, which implies the SFO complexity of $\tilde \fO( \sigma^2 \epsilon^{-2} + L/\lambda)$ we desired.
Hence, the remains in the proof only need to verify there exist some $T_{\rm max}$, $\delta_{\rm min}$ and $D_{\rm max}$  to guarantee the conditions of (\ref{eq:uni-bound}) hold for each call of \SEGpp~in \RAINpp~(line 5 and line 7 in Algorithm~\ref{alg:RAIN++:expand}).

\paragraph{The upper bound of $\Vert w_{s,k} - z_{s,k}^+ \Vert^2$:} Consider the first time when SEG$^{++}$ (Algorithm \ref{alg:SEG++}) is called in Algorithm~\ref{alg:RAIN++:expand} (i.e. $s = 0$ and $k = 0$). 
Note that we have $w_{s,k} = z_{s,k} = z_0$, then Lemma~\ref{lem:NER} means $\Vert z_0^+ - z^* \Vert \le 2\Vert z_0 - z^* \Vert$ and we achieve
{\small \begin{align*}
    \Vert  w_{s,k} - z_{s,k}^+ \Vert = \Vert z_0 - z_0^+ \Vert = \frac{1}{2L} \Vert F(z_0^+) \Vert = \frac{1}{2L} \Vert F(z_0^+) - F(z^*) \Vert \le \frac{1}{2}\Vert z_0^+ - z^* \Vert \le  \Vert z_0 - z^* \Vert \le D.
\end{align*}}
Now we consider the other cases (i.e. $s>0$ or $k>0$). In these rounds, SEG$^{++}$ use $(z_{s,k},w_{s,k})$ as the initial point, which is the output of the previous call of SEG$^{++}$. Then it holds that 
\begin{align*}
\BE\Vert z_{s,k}^+ - w_{s,k} \Vert^2 \le \frac{\sigma^2}{4L^2}
\end{align*}
by the result of (\ref{sigma:output}) in Theorem~\ref{thm:SEG++}. Therefore, combining above the two cases, it holds that 
\begin{align*}
    \BE\Vert z_{s,k}^+ - w_{s,k} \Vert^2 \le \frac{\sigma^2}{4L^2} + D^2
\end{align*}
for all $s$ and $k$.

\paragraph{The settings of $T_{\rm max}$ and $1/\delta_{\rm min}$:} We present a simple bound for $T_{\rm max}$ and $1/\delta_{\rm min}$ from (\ref{eq:para-EA}) and (\ref{eq:ms-RR}). 
For each stage of $s$ in Algorithm~\ref{alg:RAIN++:expand}, we run SEG$^{++}$ for two phases, that is $k=0,\dots,N_{s-1}$ and $k=N_s\dots,N_s+K_s+1$. 
\begin{itemize}
\item For $k=0,\dots,N_{s-1}$, we run SEG$^{++}$ by the stepsize of $1/(48 L)$ and the iteration number of $96L/\lambda_s$. Since $\lambda_s \ge \lambda$, setting $\delta_{\rm min}$ and $T_{\rm max}$ with
\begin{align}\label{bound:T_max-delta_min}
\frac{1}{\delta_{\rm min}} \le \frac{48 L}{\lambda} \qquad\text{and}\qquad \quad T_{\rm max} \le \frac{96 L}{\lambda} 
\end{align}
satisfies the conditions in (\ref{eq:uni-bound}). 
\item For $k=N_s\dots,N_s+K_s+1$, the stepsize of \SEGpp~is decreasing and the iteration numbers of \SEGpp~is decreasing. 
So we only needs to consider the last time we call SEG$^{++}$, whose
stepsize and iteration numbers are $1/(96 L \times 2^{K_s -N_s})$ and $384 \times2^{K_s-N_s}$ respectively.
Since we have $\lambda_s \le 6 L$, setting
\begin{align}\label{bound:T_max-delta_min2}
T_{\rm max} &\le 384 \times2^{K_{S-1}- N_{S-1}} \le 786432 S^2 \sigma^2 \epsilon^{-2},
\end{align}
and
\begin{align}\label{bound:T_max-delta_min3}
\frac{1}{\delta_{\rm min}} &\le \frac{1}{\lambda_{S-1}/(96 L \times 2^{K_{S-1}- N_{S-1}})} \le  8192 S^2 \sigma^2 \epsilon^{-2}.
\end{align}
satisfies the conditions in (\ref{eq:uni-bound}). 
\end{itemize}
Combining the bounds of (\ref{bound:T_max-delta_min}), (\ref{bound:T_max-delta_min2}) and (\ref{bound:T_max-delta_min3}), we obtain
\begin{align} \label{setting-large}
\begin{split}
\frac{1}{\delta_{\rm min}} &\le \max\left\{ 48 L /\lambda , 8192 S^2 \sigma^2 \epsilon^{-2} \right\} \quad \text{and}  \\
T_{\rm max} &\le \max \left\{ 96 L / \lambda, 786432 S^2 \sigma^2 \epsilon^{-2} \right\}.
\end{split}
\end{align}

\paragraph{The setting of $D_{\rm max}$:} This paragraph shows for all $s$ and $k$, the term $\BE[\Vert z_{s,k} - z_s^* \Vert^2]$ can be bounded by some positive constant $D_{\rm max}^2$. Since Line 4-8 in Algorithm~\ref{alg:RAIN++:expand} can be regarded as running Epoch-SEG on $f_{2L}^{(s)}$, the result (\ref{side:init-D}) in the proof of Lemma~\ref{lem:Epoch-SEG} means
\begin{align} \label{bound:sk-star}
    \BE\Vert z_{s,k} - z_s^* \Vert^2 \le \BE\Vert z_{s,0} - z_s^*  \Vert^2 + \frac{8 \sigma^2}{\lambda L}
\end{align}
for any $k =0,1,\cdots, N_s+K_s-1$. Now we bound $\BE\Vert z_{s,0} - z_s^* \Vert^2$. 
\begin{itemize}
\item For $s = 0$, we have $z_{s,0} = z_0$ and $z_s^* = z^*$, which directly implies $\BE\Vert z_{s,0} - z_s^* \Vert^2 = D^2$. 
\item For $s\ge 1$, the result of (\ref{eq:ms-RR}) guarantees
\begin{align*} 
    256  \lambda_{s-1}^2 S^2 \mathbb{E}\Vert z_{s,0} - z_{s-1}^* \Vert^2 \le \epsilon^2.
\end{align*}
Using the non-expansiveness after anchoring (Lemma \ref{lem:non-ex}), we obtain
\begin{align*}
    \BE\Vert z_{s,0}- z_{s}^* \Vert^2 \le \mathbb{E}\Vert z_{s,0} - z_{s-1}^* \Vert^2 \le \frac{\epsilon^2}{256 \lambda^2 S^2}.
\end{align*}
for any $s \ge 1$.
\end{itemize}
Finally, combining above two cases and (\ref{bound:sk-star}) means the setting
\begin{align}\label{eq:Dmax}
    D_{\rm max}^2 \le \max \left\{ D^2, \frac{\epsilon^2}{256 \lambda^2 S^2} \right\} + \frac{8 \sigma^2}{\lambda L}.
\end{align}
satisfies the condition in (\ref{eq:uni-bound}).

{In summary, we can set $\delta_{\rm min}$, $T_{\rm max}$ and $D_{\rm max}$ by following (\ref{setting-large}) and (\ref{eq:Dmax}) to guarantee the conditions in (\ref{eq:uni-bound}) hold, 
then the SFO complexity of each SEG$^{++}$ call in RAIN$^{++}$ is $l = {\rm polylog} (L/\lambda, 1/ \epsilon, D, \sigma)=\tilde \fO(1)$ and total SFO complexity of RAIN$^{++}$ is $\tilde \fO(\sigma^2 \epsilon^{-2} + L / \lambda )$. Since Proposition~\ref{prop:id-scsc} says $\lambda = \Theta(\alpha)$, the SFO complexity is $\tilde\fO(\sigma^2 \epsilon^{-2} + L / \alpha )$ for ID case. 
For the NC case, we takes $\lambda  = \Theta(\epsilon)$, which leads to the SFO complexity of $\tilde\fO(\sigma^2 \epsilon^{-2} + L / \epsilon )$.}
\end{proof}

\subsection{The Proof of Proposition~\ref{prop:final}}
\begin{proof}
We denote $F_{2L}$ as the gradient operator of $f_{2L}(x,y)$ and $F$ as the gradient operator of $f(x,y)$. Let $z^+=(x^+,y^+)$ be the solution to 
\begin{align*}
\min_{x' \in \BR^d_x} \max_{y' \in \BR^{d_y}} g(x,y) \triangleq f(x,y) + L \Vert x' - x \Vert^2 - L \Vert y - y' \Vert^2.    
\end{align*}
For any $w=(u,v)$, we have
\begin{align*}
    \Vert F(w) \Vert &\le  \Vert F(z^+) \Vert +\Vert F(z^+) - F(w) \Vert.
\end{align*}
Since it holds that $F(z^+) = F_{2L}(z)$ and we already have $\Vert F(z^+) \Vert \le\epsilon$, the smoothness of~$F$ means 
\begin{align}\label{cond:final}
\BE\Vert w - z^+ \Vert \le \frac{\epsilon}{L},
\end{align}
which can make sure that $w=(u,v)$ is a $2\eps$-stationary point of $f(x,y)$ in expectation. 
We can verify the condition number of $g(x,y)$ is $\Theta(1)$, which means finding $w=(u,b)$ that satisfies (\ref{cond:final}) can be finished within $\fO(\log(\epsilon^{-1})+ \sigma^2 \epsilon^{-2})$ SFO complexity by running Epoch-SEG (Algorithm~\ref{alg:Epoch-SEG}) and following the result of Lemma~\ref{lem:Epoch-SEG}. 
\end{proof}

\section{The Proofs in Section~\ref{sec:lower-bound}} \label{apx:func}


\citet{foster2019complexity} showed that the lower bound complexity for finding a point with small gradient can be decomposed as the statistical complexity given by stochastic oracle and the optimization complexity with deterministic oracle. 
We following their ideas to construct our lower bounds for stochastic minimax optimization.

\subsection{The Proof of Theorem \ref{thm:lower-SC}}

\begin{proof} 
We first consider the statistical complexity. There exists an $L$-smooth and convex function $f_{\rm sample}$ such that any $\fA$ needs at least an SFO complexity of $\Omega(\sigma^2 \epsilon^{-2} \log(L \epsilon^{-1}))$ to find its $\epsilon$-stationary point~\cite[Theorem 2]{foster2019complexity}. 
The worst-case convex function for the sample complexity~\cite{foster2019complexity} is given by its first-order derivative
\begin{align*}
f_{\rm sample}^{\prime}\left(x; Z_{t}\right)=
\begin{cases}
-2 \epsilon, & x<0, \\
2 \epsilon, & x \geq D, \\
\dfrac{x-a_{j}}{D / N} \sigma Z_{t, j+1}+\left(1-\dfrac{x-a_{j}}{D / N}\right) \sigma Z_{t, j} & x \in\left[a_{j}, a_{j+1}\right) \text { for some } j<N,
\end{cases}
\end{align*}
where $Z_{t,j} \in \{-1,1\}$ is drawn form the following distribution
\begin{align*}
\BP\left(Z_{t, j}=1\right)=
\begin{cases}
\frac{1}{2}-p, & j \leq j^{*}, \\
\frac{1}{2}+p, & j>j^*.
\end{cases}
\end{align*}
In above equations, the values of $(a_1,\cdots,a_N)$ and $j^*$ is given by the lower bound for noisy binary search problem \cite{foster2019complexity}. 
Setting $N = LD / (4 \epsilon)$ with $D= \Vert x^* - x_0 \Vert$, we obtain the function \begin{align*}
    F(x) = \mathbb{E}_{Z_t}[f_{\rm sample}(x;Z_t)],
\end{align*} 
which is $L$-smooth and convex; and leads to a lower bound 
\begin{align} \label{lower:order}
    \Omega(\sigma^2 \epsilon^{-2} \log (L D \epsilon^{-1}))
\end{align} 
for finding an $\epsilon$-stationary point of $F(x)$. 
For minimax optimization, we construct the function
\begin{align*}
    H(x,y) = F(x) - F(y),
\end{align*}
that is $L$-smooth and convex-concave. Naturally, it provide the lower bound  (\ref{lower:order}) for finding an $\epsilon$-stationary point for $H(x,y)$.

Then we consider the optimization complexity.
We consider the strongly-convex-strongly-concave function~\cite{luo2021near} as follows
\begin{align*}
    f_{\rm SCSC}(x,y) = \frac{\lambda' r}{2} \Vert x \Vert^2 + \lambda' x^\top (B y - c) - \frac{\lambda' r}{2} \Vert y  \Vert^2,
\end{align*}
where 
\begin{align*}
B = \begin{bmatrix}
    1 & & & & \\
    -1 & 1 & & & \\
    & \ddots & \ddots & & \\
    & & -1 & 1 & \\
    & & & -1 & \sqrt{r \omega}
\end{bmatrix} \in \BR^{d\times d}, \quad
c = \begin{bmatrix}
\omega \\ 0 \\ \vdots \\ 0
\end{bmatrix}
\quad\text{and}\quad
\omega = \frac{\sqrt{r^2+4} - r}{2}.    
\end{align*}
By setting
\begin{align*}
r=\sqrt{\frac{8}{L^{2} / \lambda'^{2}-2}}, \quad \lambda'=\frac{\lambda}{r}
\quad \text{and}\quad 
d=\left\lfloor\frac{1}{r} \log \left(\frac{1}{2 \varepsilon}\right)\right\rfloor-4,
\end{align*}
we obtain the special case of the lower bound in Theorem 1 of \citet{luo2021near} with $n = 1$. It is shown that~\cite{luo2021near} the function $f_{\rm SCSC}(x,y)$ is $L$-smooth, $\lambda$-strongly-convex-$\lambda$-strongly-concave and any first-order algorithm requires at least $\Omega(\kappa\log (1/\epsilon))$ number of gradient calls to obtain a point $z$ such that $\Vert z - z^* \Vert\leq\epsilon$.
Since it holds that $\Vert \nabla f(z) \Vert \ge \lambda \Vert z - z^* \Vert$, any $\fA$ needs at least an SFO complexity of 
\begin{align}\label{lower:order2}
\Omega(\kappa \log(\lambda \epsilon^{-1}))    
\end{align}
to find an $\epsilon$-stationary point of $f_{\rm SCSC}(x,y)$.

Combining the statistical complexity of (\ref{lower:order}) and the optimization complexity of (\ref{lower:order2}) completes the proof.
\end{proof}



\begin{algorithm*}[t]  
\caption{SEAG $(f, z_0, \eta, T)$} 
\begin{algorithmic}[1] \label{alg:prac-EAG}
\STATE \textbf{for} $t = 0,1,\cdots,T-1$ \textbf{do} \\[0.1cm]
\STATE \quad $\xi_i \leftarrow $ a random index \\[0.1cm]
\STATE \quad $z_{t+1/2} \leftarrow z_t - \left( 1 - \frac{1}{t+1} \right) \eta  F(z_t;\xi_i) + \frac{1}{t+1} (z_0 - z_t)$ \\[0.1cm]
\STATE \quad $\xi_j \leftarrow $ a random index \\[0.1cm]
\STATE \quad $z_{t+1} \leftarrow  z_t - \eta F(z_{t+1/2};\xi_j) +\frac{1}{t+1} (z_0 - z_t)$ \\[0.1cm]
\STATE \textbf{end for} \\[0.1cm]
\STATE \textbf{return} $z_T$ 
\end{algorithmic}
\end{algorithm*}

\begin{algorithm*}[t]  
\caption{PDHG $(f, x_0, y_0, ,\eta, T)$} 
\begin{algorithmic}[1] \label{alg:prac-PDHG}
\STATE $\bar x_0 = x_0$, $\bar y_0 = y_0$ \\ [0.1cm] 
\STATE $\xi_y^0 \leftarrow$ a random index \\[0.1cm]
\STATE $s_0 = \nabla_y f(x_0,y_0;\xi_y^0)$ \\ [0.1cm]
\STATE \textbf{for} $t = 0,1,\cdots,T-1$ \textbf{do} \\[0.1cm]
\STATE \quad $y_{t+1} \rightarrow y_t - \eta s_t$ \\ [0.1cm]
\STATE \quad $ \xi_x^t \leftarrow$ a random index \\[0.1cm]
\STATE \quad $x_{t+1}  \leftarrow x_t - \eta \nabla_x f(x_t,y_{t+1}; \xi_x^t)$ \\[0.1cm]
\STATE \quad $\xi_y^t \leftarrow$ a random index \\[0.1cm] 
\STATE \quad $s_{t+1} \leftarrow \frac{t+2}{t+1} \nabla_y f(x_t,y_t; \xi_y^t) - \frac{t}{t+1} \nabla_y f(x_t,y_t; \xi_y^{t-1})$ \\[0.1cm]
\STATE \quad $\bar x_{t+1} \leftarrow \frac{t-1}{t+1} \bar x_t + \frac{2}{t+1} x_{t+1}$ \\[0.1cm] 
\STATE \quad $\bar y_{t+1} \leftarrow \frac{t-1}{t+1} \bar y_t + \frac{2}{t+1} y_{t+1}$ \\[0.1cm] 
\STATE \textbf{end for} \\[0.1cm]
\STATE \textbf{return} $(\bar x_T,\bar y_T)$ 
\end{algorithmic}
\end{algorithm*}

\begin{algorithm*}[ht]  
\caption{A Single-Loop Variant of RAIN $(f, z_0, \eta, T, \lambda, \gamma)$} 
\begin{algorithmic}[1] \label{alg:prac-RAIN}
\STATE \textbf{for} $s = 0,1,\cdots,T-1$ \textbf{do} \\[0.1cm]
\STATE \quad $\xi_i \leftarrow $ a random index \\[0.1cm]
\STATE \quad $z_{t+1/2} \leftarrow z_t - \eta ( F(z_t;\xi_i) + \lambda \gamma \sum_{j=0}^{t-1} (1+\gamma)^j (z_t - z_j  ) )$ \\[0.1cm]
\STATE \quad $\xi_j \leftarrow $ a random index \\[0.1cm]
\STATE \quad $z_{t+1} \leftarrow z_t - \eta (F(z_{t+1/2};\xi_j) + \lambda \gamma \sum_{j=0}^{t-1} (1+\gamma)^j (z_{t+1/2} - z_{j}  ) )$ \\[0.1cm]
\STATE \textbf{end for} \\[0.1cm]
\STATE \textbf{return} $z_T$ 
\end{algorithmic}
\end{algorithm*}

\subsection{The Proof of Theorem \ref{thm:lower-C}}

\begin{proof}
The statistical complexity can be obtained by following the proof of Theorem~\ref{thm:lower-SC}.
For the optimization complexity, Theorem 3 of \citet{yoon2021accelerated} provide the convex-concave function of the form
\begin{align*}
    f_{\rm CC}(x,y) = (x- x^*)^\top A (y -x^*),
\end{align*}
where $A \in \BR^{d\times d}$ is symmetric and $x^*$ is the component of a stationary point $z^*=(x^*,y^*)$.
\citet{yoon2021accelerated} showed that there exist some $A$ and $x^*$ such that $0 \preceq A \preceq L$ and finding an $\epsilon$-stationary point of the corresponding $f_{\rm CC}(x,y)$ requires at least $\Omega(L {\color{purple} D}\epsilon^{-1})$ gradient calls.
We complete the proof by combining the optimization complexity in Theorem~\ref{thm:lower-SC}.
\end{proof}

\subsection{The Proof of Corollary \ref{cor:lower-NC}}

\begin{proof}
Since the gradient operator of convex-concave function is monotone, it is also negative comonotone. Hence, the lower bound in Theorem~\ref{thm:lower-C} is also applicable for negative comonotone setting and the result of this corollary is obtained.
\end{proof}

\subsection{The Proof of Corollary \ref{cor:lower-ID}}

\begin{proof}
For $\tau \ge 2L$, the assumptions of $L$-smooth and $\alpha$-strongly-convex-$\alpha$-strongly-concave on $f(x,y)$ directly means the function is $(\tau,\alpha)$-intersection-dominant.
Hence, the lower bound in Theorem~\ref{thm:lower-SC} is also applicable for  $(\tau,\alpha)$-intersection-dominant setting and the result of this corollary is obtained.
\end{proof}

\section{Details for Numerical Experiments} \label{apx:exp}

We give the details for the implementation of the algorithms in our experiments. 
\subsection{The Concave-Concave Case}

The implementations of algorithms in convex-concave case:
\begin{itemize}
\item For ordinary stochastic extragradient method (Algorithm~\ref{alg:SEG}, SEG), we replace the output from uniform sampling in all the history with the point in the last iteration for better performance.
\item For regularized SEG (R-SEG), we use the regularization trick of \citet{nesterov2012make} for minimax optimization, that is using SEG to solve the regularized minimax problem 
\begin{align*}
\min_{x\in\BR^{d_x}}\max_{y\in\BR^{d_y}} g(x,y) \triangleq f(x,y) + \frac{\lambda}{2} \Vert x - x_0 \Vert^2 - \frac{\lambda}{2} \Vert y - y_0 \Vert^2
\end{align*}
for some small $\lambda$. 
\item
For stochastic extra anchor gradient (SEAG), we follow the algorithm by \cite{lee2021fast}, which is described in Algorithm \ref{alg:prac-EAG}.
\item {For primal-dual hybrid gradient (PDHG), we follow the Algorithm 1 by \citet{zhao2022accelerated} with $\alpha_t = \tau_t = \eta$, which is described in Algorithm \ref{alg:prac-PDHG}.} 
\item For RAIN (Algorithm \ref{alg:RAIN}), we directly set $N_s = 1$ and $K_s =0$. {
This simplified variant yields a single-loop implementation described in Algorithm \ref{alg:prac-RAIN}, and we observe it has good performce in practice.}
\end{itemize}

{\subsection{The Nonconcave-Nonconcave Case}

The implementations of algorithms in nonconvex-nonconcave case:
\begin{itemize}
    \item For SEG${}^+$, we follow Equation (EG${}_p^+$) by \citet{diakonikolas2021efficient} with $\beta = 1/2$ and $\alpha_k = \eta$,
    by but output the point in the last iteration for better performance. 
    \item For SFEG, we follow Algorithm 1 by~\citep{lee2021fast} by replacing the exact gradient with the stochastic gradient.
    \item For RAIN${}^{++}$, we follow the setting of Algorithm \ref{alg:prac-RAIN} ($N_s=1 $ and $K_s = 0$) for the subroutine RAIN. 
    For the steps of MLMC, we set $N =1$, $M=1$ and $K=1$.  
    This simplified variant is easy to implement and performs well in practice.
\end{itemize}
}

\subsection{Hyperparameter Selection}
\noindent For each algorithm, we tune the parameters $\eta$ from $\{0.005, 0.01, 0.05, 0.1, 1, 5, 10\}$, $\lambda$ from $\{0.001, 0.01, 0.1, 1\}$ and $\gamma$ from $\{0.001, 0.01, 0.1, 1\}$ and reports the best run.

\section{The Merged Algorithms for Easy Reference}
\label{apx:complete-alg}

\begin{algorithm*}[t]  
\caption{RAIN $(z_0, \lambda, L,\{N_s\}_{s=0}^{S-1}, \{K_s \}_{s=0}^{S-1}, \gamma)$} 
\begin{algorithmic}[1] \label{alg:complete-RAIN}
\STATE $\lambda_0 = \gamma \lambda,  S = \lfloor \log_{(1+\gamma)}(L/\lambda) \rfloor$ \\[0.1cm]
\STATE \textbf{for} $s = 0,1,\cdots, S-1$ \\[0.1cm]
\STATE \quad  $z_{s,0} \leftarrow z_{s}$ \\[0.1cm]
\STATE \quad \textbf{for} $k = 0,1,\cdots, N_s-1$ \\ [0.1cm]
\STATE \quad \quad $z_{s,k,0} \leftarrow z_{s,k}$, $\eta \leftarrow \frac{1}{8L}$, $T \leftarrow \frac{16 L}{\lambda_s}$ \\[0.1cm]
\STATE \quad \quad \textbf{for} $t = 0,1,\cdots, T-1$\\ [0.1cm]
\STATE \quad \quad \quad $z_{s,k,t+1/2} = z_{s,k,t} - \eta (F(z_{s,k,t};\xi_{s,k,t}) + \sum_{j=0}^{s-1} \lambda_j (z_{s,k,t} - z_j  ) )$ \\[0.1cm]
\STATE \quad \quad \quad $z_{s,k,t+1} = z_{s,k,t} - \eta (F(z_{s,k,t+1/2};\xi_{s,k,t+1/2}) + \sum_{j=0}^{s-1} \lambda_j (z_{s,k,t+1/2} - z_j  ) )$ \\[0.1cm]
\STATE \quad \quad \textbf{end for} \\[0.1cm]
\STATE \quad \quad $z_{s,k+1} \leftarrow$ uniformly samples from $\{z_{s,k,t+1/2} \}_{t=0}^{T-1}$ \\[0.1cm]
\STATE \quad \textbf{end for} \\[0.1cm]
\STATE \quad \textbf{for} $k = N_s,N_s+1,\cdots, N_s+K_s-1$ \\ [0.1cm]
\STATE \quad \quad $z_{s,k,0} \leftarrow z_{s,k}$, $\eta \leftarrow \frac{1}{2^{k-N_s+4} \cdot L}$, $T \leftarrow \frac{2^{k-N_s+6} \cdot}{\lambda_s}$ \\[0.1cm]
\STATE \quad \quad \textbf{for} $t = 0,1,\cdots, T-1$\\ [0.1cm]
\STATE \quad \quad \quad $z_{s,k,t+1/2} = z_{s,k,t} - \eta (F(z_{s,k,t}; \xi_{s,k,t}) +  \sum_{j=0}^{s-1} \lambda_j (z_{s,k,t} - z_j  ) )$ \\[0.1cm]
\STATE \quad \quad \quad $z_{s,k,t+1} = z_{s,k,t} - \eta (F(z_{s,k,t+1/2}; \xi_{s,k,t+1/2}) + \sum_{j=0}^{s-1} \lambda_j (z_{s,k,t+1/2} - z_j ) )$ \\[0.1cm]
\STATE \quad \quad \textbf{end for} \\[0.1cm]
\STATE \quad \quad $z_{s,k+1} \leftarrow$ uniformly samples from $\{z_{s,k,t+1/2} \}_{t=0}^{T-1}$ \\[0.1cm]
\STATE \quad \textbf{end for} \\[0.1cm]
\STATE \quad $z_{s+1} \leftarrow z_{s, N_s+K_s}$ \\[0.1cm]
\STATE \quad $\lambda_{s+1} \leftarrow  (1+ \gamma)\lambda_s$ \\[0.1cm]
\STATE \textbf{end for} \\[0.1cm]
\STATE \textbf{return} $z_{S}$ 
\end{algorithmic}
\end{algorithm*}

\begin{algorithm*}[t]  
\caption{RAIN${}^{++}$ $(z_0, \lambda, L,\{N_s\}_{s=0}^{S-1}, \{K_s \}_{s=0}^{S-1}, \gamma, N, K, M)$} 
\begin{algorithmic}[1] \label{alg:complete-RAIN++}
\STATE $\lambda_0 = \gamma \lambda,  S = \lfloor \log_{(1+\gamma)}(3 L/\lambda) \rfloor$ \\[0.1cm]
\STATE \textbf{for} $s = 0,1,\cdots, S-1$ \\[0.1cm]
\STATE \quad  $z_{s,0} \leftarrow z_{s}$, $w_{s,0} \leftarrow w_s$ \\[0.1cm]
\STATE \quad \textbf{for} $k = 0,1,\cdots, N_s-1$ \\ [0.1cm]
\STATE \quad \quad $z_{s,k,0} \leftarrow z_{s,k}$, 
$w_{s,k,-1/2} \leftarrow w_{s,k}$,
$\eta \leftarrow \frac{1}{24 L}$, $T \leftarrow \frac{48 L}{\lambda_s}$ \\[0.1cm]
\STATE \quad \quad \textbf{for} $t = 0,1,\cdots, T-1$\\ [0.1cm]
\STATE \quad \quad \quad $ (\hat F_{s,k,t} , w_{s,k,t}) = \text{EnvGradEst}(z_{s,k,t}, w_{s,k,t-1/2}, L, N,K,M) $ \\[0.1cm]
\STATE \quad \quad \quad $z_{s,k,t+1/2} = z_{s,k,t} - \eta (\hat F_{s,k,t} + \sum_{j=0}^{s-1} \lambda_j (z_{s,k,t} - z_j  ) )$ \\[0.1cm]
\STATE \quad \quad \quad $ (\hat F_{s,k,t+1/2} , w_{s,k,t+1/2}) = \text{EnvGradEst}(z_{s,k,t}, w_{s,k,t}, L, N,K,M) $ \\[0.1cm]
\STATE \quad \quad \quad $z_{s,k,t+1} = z_{s,k,t} - \eta (\hat F_{s,k,t+1/2} + \sum_{j=0}^{s-1} \lambda_j (z_{s,k,t+1/2} - z_j  ) )$ \\[0.1cm]
\STATE \quad \quad \textbf{end for} \\[0.1cm]
\STATE \quad \quad Draw $J \sim {\rm Unif}([T])$ \\[0.1cm]
\STATE \quad \quad $z_{s,k+1} \leftarrow z_{s,k,J+1/2}$, $w_{s,k+1} \leftarrow w_{s,k,J+1/2}$  \\[0.1cm]
\STATE \quad \textbf{end for} \\[0.1cm]
\STATE \quad \textbf{for} $k = N_s,N_s+1,\cdots, N_s+K_s-1$ \\ [0.1cm]
\STATE \quad \quad $z_{s,k,0} \leftarrow z_{s,k}$, $w_{s,k,-1/2} \leftarrow w_{s,k}$, $\eta \leftarrow \frac{1}{2^{k-N_s+4} \cdot 3 L}$, $T \leftarrow \frac{2^{k-N_s+6} \cdot 3L}{\lambda_s}$ \\[0.1cm]
\STATE \quad \quad \textbf{for} $t = 0,1,\cdots, T-1$\\ [0.1cm]
\STATE \quad \quad \quad $ (\hat F_{s,k,t} ,w_{s,k,t}) = \text{EnvGradEst}(z_{s,k,t}, w_{s,k,t-1/2}, L, N,K,M) $ \\[0.1cm]
\STATE \quad \quad \quad $z_{s,k,t+1/2} = z_{s,k,t} - \eta (\hat F_{s,k,t} +  \sum_{j=0}^{s-1} \lambda_j (z_{s,k,t} - z_j  ) )$ \\[0.1cm]
\STATE \quad \quad \quad $ (\hat F_{s,k,t+1/2} , w_{s,k,t+1/2}) = \text{EnvGradEst}(z_{s,k,t}, w_{s,k,t}, L, N,K,M) $ \\[0.1cm]
\STATE \quad \quad \quad $z_{s,k,t+1} = z_{s,k,t} - \eta (\hat F_{s,k,t+1/2} + \sum_{j=0}^{s-1} \lambda_j (z_{s,k,t+1/2} - z_j ) )$ \\[0.1cm]
\STATE \quad \quad \textbf{end for} \\[0.1cm]
\STATE \quad \quad Draw $J \sim {\rm Unif}([T])$ \\[0.1cm]
\STATE \quad \quad $z_{s,k+1} \leftarrow z_{s,k,J+1/2}$, $w_{s,k+1} \leftarrow w_{s,k,J+1/2}$  \\[0.1cm]
\STATE \quad \textbf{end for} \\[0.1cm]
\STATE \quad $z_{s+1} \leftarrow z_{s, N_s+K_s}$, $ w_{s+1} \leftarrow w_{s, N_s+K_s}$ \\[0.1cm]
\STATE \quad $\lambda_{s+1} \leftarrow  (1+ \gamma)\lambda_s$ \\[0.1cm]
\STATE \textbf{end for} \\[0.1cm]
\STATE \textbf{return} $z_{S}$ 
\end{algorithmic}
\end{algorithm*}

\begin{algorithm*}[t]  
\caption{EnvGradEst$(\bar z, z_0, L, N, K, M)$} 
\begin{algorithmic}[1] \label{alg:Saddle-Grad-Est}
\STATE \textbf{for} $m = 0,1,\cdots,M-1$ \textbf{do} \\[0.1cm]
\STATE \quad draw $J \sim {\rm Geom}\left(1/2\right)$ \\[0.1cm]
\STATE \quad \textbf{for} $k=0,1,\cdots, N-1$ \\[0.1cm]
\STATE \quad \quad $z_{m,k,0} \leftarrow z_{0}$, $\eta \leftarrow \frac{1}{12L}$, $T \leftarrow 24$ \\ [0.1cm]
\STATE \quad \quad \textbf{for} $t = 0,1,\cdots,T-1$ \\[0.1cm]
\STATE \quad \quad \quad$z_{m,k,t+1/2} \leftarrow z_{m,k,t} -\eta (F(z_{m,k,t};\xi_{m,k,t}) + 2L(z_{m,k,t} - \bar z)) $ \\[0.1cm]
\quad \quad \quad $z_{m,k,t+1} \leftarrow z_{m,k,t} -\eta (F(z_{m,k,t+1/2};\xi_{m,k,t+1/2}) + 2L(z_{m,k,t+1/2} - \bar z)) $ \\[0.1cm] 
\quad \quad \textbf{end for} \\[0.1cm] 
\STATE \quad \quad $z_{m,k+1} \leftarrow$ uniformly samples from $\{z_{m,k,t+1/2} \}_{t=0}^{T-1}$ \\[0.1cm]
\STATE \quad \textbf{end for} \\[0.1cm]
\STATE \quad \textbf{for} $k = N,N+1,\cdots, N+J-1$ \\ [0.1cm]
\STATE \quad \quad $z_{m,k,0} \leftarrow z_{m,k}$, $\eta \leftarrow \frac{1}{2^{k-N+3} \cdot 3L}$, $T \leftarrow 2^{k-N+5} \cdot 3L$\\ [0.1cm]
\STATE \quad \quad \textbf{for} $t = 0,1,\cdots,T-1$ \\[0.1cm]
\STATE \quad \quad \quad$z_{m,k,t+1/2} \leftarrow z_{m,k,t} -\eta (F(z_{m,k,t};\xi_{m,k,t}) + 2L(z_{m,k,t} - \bar z)) $ \\[0.1cm]
\quad \quad \quad $z_{m,k,t+1} \leftarrow z_{m,k,t} -\eta (F(z_{m,k,t+1/2};\xi_{m,k,t+1/2}) + 2L(z_{m,k,t+1/2} - \bar z)) $ \\[0.1cm] 
\quad \quad \textbf{end for} \\[0.1cm] 
\STATE \quad \quad $z_{m,k+1} \leftarrow$ uniformly samples from $\{z_{m,k,t+1/2} \}_{t=0}^{T-1}$ \\[0.1cm]
\STATE \quad \textbf{end for} \\[0.1cm]
\STATE \quad $\hat z_m =  z_{m,N} + 2^J (z_{m,N+J} - z_{m,N+J-1}) \mathbb{I}\,[J \le K]$ \\[0.1cm]
\STATE \textbf{end for} \\[0.1cm]
\STATE $\hat z^+ \leftarrow \frac{1}{M} \sum_{m=0}^{M-1} \hat z_m$, $\hat F = 2L(\bar z - \hat z^+)$ \\[0.1cm]
\STATE \textbf{return} $(\hat F, \hat z^+)$
\end{algorithmic}
\end{algorithm*}

{In the main text, we present our algorithm by nested functions to facilitate the theoretical analysis. 
In this section, we write all steps of RAIN in Algorithm \ref{alg:RAIN} without the presentation of any subroutine call, which is easy to follow for readers who are interested in the implementation.
We also provided a merged presentation for RAIN$^{++}$ in Algorithm~\ref{alg:complete-RAIN++}.
Since the steps of RAIN$^{++}$ is indeed complicated, 
it still includes one subroutine (Algorithm \ref{alg:Saddle-Grad-Est}) to present the evaluation for the gradient of the saddle envelope.
Note that EnvGradEst (Algorithm \ref{alg:Saddle-Grad-Est}) achieves a nearly unbiased gradient estimator of the saddle envelope and the initial point in the next subroutine call.
In the main text, EnvGradEst is presented by applying Epoch-SEG${}^+$ on the sub-problem 
\begin{align*}
\min_{x\in\BR^{d_x}}\max_{y\in\BR^{d_y}}
g_{t,s,k}(x,y): = f(x_{t,s,k},y_{t,s,k}) + L \Vert x - x_{t,s,k} \Vert^2 - L \Vert y - y_{t,s,k} \Vert^2      
\end{align*}
to output $\hat F_{s,k,t} = 2 L (z_{s,k,t} - w_{s,k,t})$ and $w_{s,k,t}$.
We can verify they are equivalent.

}

\newpage

\bibliography{22-1126}

\end{document}